\newtheorem{definition}{Definition}
\newtheorem{proposition}{Proposition}
\newtheorem{theorem}{Theorem}
\newtheorem{lemma}[theorem]{Lemma}
\newenvironment{mechanism}[1]
{\innermech}
{\endinnermech}
\begin{document}

    \title{Strategyproof Mechanisms for Additively Separable Hedonic Games and Fractional Hedonic Games}
    \author[1]{Michele Flammini}
    \author[2]{Gianpiero Monaco}
    \author[3]{Qiang Zhang}
    \affil[1]{Gran Sasso Science Institute, and University of L'Aquila, Italy\\
     \texttt{michele.flammini@univaq.it}}
    \affil[2]{University of L'Aquila, Italy\\
     \texttt{gianpiero.monaco@univaq.it}}   
    \affil[3]{University of Warsaw, Poland\\
     \texttt{csqzhang@gmail.com}}


\date{}
\maketitle

\begin{abstract}
Additively separable hedonic games and fractional hedonic games have received considerable attention. 
They are coalition forming games of selfish agents based on their mutual preferences. Most of the work in
the literature characterizes the existence and structure of stable outcomes (i.e.,
partitions in coalitions), assuming that preferences are given.
However, there is little discussion on this assumption. In fact, agents
receive different utilities if they belong to different partitions,
and thus it is natural for them to declare their preferences strategically
in order to maximize their benefit.
In this paper we consider strategyproof mechanisms for additively separable hedonic games and fractional hedonic games, that is, partitioning methods without payments such that utility maximizing agents have no incentive to lie about their
true preferences. We focus on social welfare maximization and provide several lower and upper bounds on the performance achievable by strategyproof mechanisms for general and specific additive functions. In most of the cases we provide tight or asymptotically tight results. All our mechanisms are simple and can be computed in polynomial time. Moreover, all the lower bounds are unconditional, that is, they do not rely on any computational or complexity assumptions. 
\end{abstract}

\section{Introduction}

Teamwork, clustering and group formations, have been important and
widely investigated issues in computer science research. In
many economic, social and political situations, individuals carry
out activities in groups rather than by themselves. In these scenarios,
it is of crucial importance to consider the satisfaction of the members of the groups. 
For example, the utility of an individual in a
group sharing a resource, depends both, on the consumption level of
the resource, and on the identity of the members in the group;
similarly, the utility for a party belonging to a political
coalition depends both, on the party trait, and on the identity of its
members. 

Hedonic games, introduced in \cite{DG1980}, model the formation of coalitions (groups) of players (or agents).  
They are games in which agents have preferences over the set of all
possible agent coalitions, and the utility
of an agent depends on the composition of the
cluster she belongs to. 

In this paper we consider \emph{additively separable hedonic games} (ASHGs), which constitute a natural and succinctly representable class of hedonic games.   
Each player in an ASHG has a value for any other player, and the utility of a coalition to a particular player
is simply the sum of the values she assigns to the members
of her coalition. Additive separability satisfies a number of
desirable axiomatic properties \cite{ABS11} and ASHGs are the non-transferable utility generalization of
graph games studied by Deng and Papadimitriou \cite{DP94}.
We further consider \emph{fractional hedonic games} (FHGs), introduced in \cite{ABH2014}, which are similar to ASHGs, with the difference that the utility of each agent is divided by the size of her cluster. This allows to model behavioral
dynamics in social environments that are not captured by ASHGs: one usually prefers having a couple of good friends in a
cluster composed by few other people rather than being part of a
crowded cluster populated by uninteresting agents.

Coalition formation in ASHGs and FHGs, has
received growing attention, but mainly from the perspective of coalition
stability, i.e., core, Nash equilibria, etc, or from a classical offline optimization point
of view, i.e., where solutions are not necessarily stable (see Related Work), with
little emphasis on mechanism design. We consider such games where agents have
private preferences. A major challenge is to design algorithms that work well even
when the input is reported by selfish agents aiming only at
maximizing their personal utility. An interesting approach is
to use strategyproof mechanisms \cite{DG2010,PT2009}, that is
designing algorithms (not using payments) where selfish utility
maximizing agents have no incentive to lie about their true
preferences. 

\noindent{\bf Our Contribution.}

We present strategyproof mechanisms for ASHGs and FHGs,
both for general and specific additive valuation functions. In particular, we consider: i) \emph{general valuations} where additive valuations among agents can get any values; ii) \emph{non-negative valuations} where additive valuations among agents can only get positive values; iii) \emph{duplex valuations} where additive valuations among agents can only get values in $\{-1,0,1\}$ (we can think about setting where each agent $i$ can express for any other agent $j$ if she is an enemy, neutral or a friend); iv) \emph{simple valuations} where additive valuations among agents can only get values in $\{0,1\}$ (we can think about setting where each agent $i$ can express for any other agent $j$ if she is neutral or a friend). The latter setting has been also considered in other papers since it models a basic economic scenario referred to in the literature as
Bakers and Millers \cite{ABH2014,BFFMM15}. See Section \ref{Sec:preliminaries} for
more details about the considered valuations.

We focus on the classical utilitarian social welfare, that is the sum of individual utilities of the players in a coalition, and provide several lower and upper bounds on the performance achievable by strategyproof mechanisms.

We are mainly interested in deterministic
mechanisms, however we also provide some randomized lower bounds (notice that randomized lower bounds are stronger than deterministic ones).
Our results are summarized in Table~\ref{table}. In most of the cases (except the case of duplex valuations) we provide tight or asymptotically tight results. 

We point out that, on the one hand, all our
mechanisms are simple and can be computed in polynomial time. On the
other hand, all the lower bounds (some of them randomized) are unconditional, that is, they do not rely on any computational or complexity assumptions.

\noindent{\bf Related Work.}

In the literature, a significant stream of research considered
hedonic games (see \cite{AS16}), and in particular ASHGs, from a strategic cooperative point of view
\cite{BKS01,BJ02,EW09}, with the purpose of characterizing the
existence and the properties of coalitions structures such as the
core, and from a non-cooperative point of view \cite{BD10,FLN15}
with special focus on pure Nash equilibria.
Computational complexity issues related to the problem of computing
stable outcomes have been considered in
\cite{ABS11,GS10,P16,PE2015,W2013}. Finally, hedonic games have also been
considered in \cite{BKKZ13,BBC04,CGW05,DEFI06,DP94} from a classical optimization point of
view, i.e., where solutions are not necessarily stable. Concerning FHGs, Aziz et al. \cite{ABH2014}, give some properties guaranteeing the (non-)existence of the core. Moreover, Brandl et al. \cite{BBS15},
study the computational complexity of understanding the existence of core and individual stable outcomes. From a
non cooperative point of view, the papers \cite{BFFMM14,BFFMM15},
study the existence, efficiency and computational complexity of Nash
equilibria. Other stability notions have
been also investigated, like in \cite{ABH2013,EFF2016}, where the authors
focused on Pareto stability. Finally,
Aziz et al. \cite{AGGMT2015}, consider the computational complexity of computing
welfare maximizing partitions (not necessarily stable).

The design of {\em truthful} mechanisms, that is of algorithms that use payments to convince the selfish agents to
reveal the truth and that then compute the outcome on the basis of their reported
values, has been studied in innumerable scenarios. However, there are
settings where monetary transfers are not feasible, because
of either ethical or legal issues \cite{NRTV2007}, or practical
matters in enforcing and collecting payments \cite{PT2009}. A growing stream of research focuses on the design of the more applicable {\em strategyproof} mechanisms, that lead agents
to report their true preferences, without using payments. 

Wright et al. \cite{WV2015} focus on strategyproof mechanisms for ASHGs. They only consider positive preferences. Under this assumption, a trivial optimal
strategyproof mechanism just puts all the agents in the same
grand coalition. Therefore, they consider coalition size
constraints and (approximate) envy-freeness. Their main
contribution is a mechanism that, despite not having theoretical
guarantees, achieves good experimental performance. 

Vall{\'{e}}e et al. \cite{VBZB2014} consider classical hedonic games with general preference relationships, and characterize the conditions of the game structure that allow rational false-name manipulations. However, they do not provide mechanisms. Aziz et al. \cite{ABH2013} show that the serial dictatorship mechanism is Pareto optimal, and strategyproof for general hedonic games when appropriate restrictions are imposed on agents. Finally, Rodr{\'{\i}}guez{-}{\'{A}}lvarez \cite{R-A2009}, studies strategyproof core stable solutions properties for hedonic games.

\noindent{\bf Paper organization.}
The paper is organized as follows. In Section \ref{Sec:preliminaries}, we formally describe the problems
and introduce some useful definitions. The studies on the performance of strategyproof mechanisms are then presented in Section \ref{sec:General:valuations_results}, \ref{sec:Non-negative:valuations_results}, \ref{sec:Duplex:valuations_results}, and \ref{sec:Simple:valuations_results}, which address, respectively, general, non-negative, duplex and simple valuations. Finally, in Section \ref{Sec:Conclusion_and_future_work}, we resume our results and list some interesting
open problems.

\begin{table}[h]
    \label{table}
    \begin{tabular}{cc|c|c|c|c|}
        \cline{3-6}
        &      & {[}-1,1{]}         & {[}0,1{]}        & \{-1,0,1\}                   & \{0,1\}               \\ \hline
        \multicolumn{1}{|c|}{\multirow{2}{*}{ASHGs}}          & L. B.  & \multirow{2}{*}{$Unbounded^*$} & \multirow{2}{*}{OPT}  & $\Omega(n), 2-\epsilon^*$    & \multirow{2}{*}{OPT}              \\ \cline{2-2} \cline{5-5}
        \multicolumn{1}{|c|}{}                                       & U. B.  &                                 &    & $O(n^2)$ &                                                         \\ \hline
        \multicolumn{1}{|l|}{\multirow{2}{*}{FHGs}} & L. B.  & \multirow{2}{*}{$Unbounded^*$} & $\frac{n}{2}$   & $2-\epsilon$                             & $\frac{6}{5}$           \\ \cline{2-2} \cline{4-6}
        \multicolumn{1}{|l|}{}                                       & U. B.  &                                & $\frac{n}{2}$  & $O(n)$                          &$2$                                \\ \hline
    \end{tabular}
    \caption{Our results for the different cases.
    * stands for randomized mechanisms. L. B. stands for lower bounds. U. B. stands for upper bounds.}
    \label{table}
\end{table}

\section{Preliminaries}\label{Sec:preliminaries}
In additive separable hedonic games (ASHGs) and fractional hedonic games (FHGs), we are given a set
$N=\{1,\ldots, n\}$ of selfish agents. The objective or outcome of the game is a partition of
the agents into disjoint coalitions $\mathcal{C} = \{C_1,
C_2,\ldots\}$, where each coalition $C_j$ is a subset of agents and
each agent is in exactly one coalition. Let $\mathscr{C}$ be the collection of all
the possible outcomes. Given a partition $\mathcal{C} \in \mathscr{C} $, we denote by $|\mathcal{C}|$  the number of its
coalitions and by $\mathcal{C}^i$ the
coalition of $\mathcal{C}$ containing agent $i$. Similarly, given a
coalition $C$, we let $|C|$ be the size or number of agents in $C$.
The \textit{grand coalition} is the outcome in which all the agents
are in the same coalition, i.e., $|\mathcal{C}|=1$. We assume that
each agent has a privately known valuation $v_i: N \rightarrow
\mathbb{R}$, mapping every agent to a real (possibly negative) value.
In ASHGs, for any $\mathcal{C} \in \mathscr{C}$, the
preference or utility of agent $i$ is
$u_i(\mathcal{C}) = \sum_{j \in \mathcal{C}^i} v_{i}(j)$, that is, it
is additively induced by her valuation function.
Similarly, in FHGs,  for any $\mathcal{C} \in
\mathscr{C}$, the utility of agent $i$ is $u_i(\mathcal{C}) =
\frac{\sum_{j \in \mathcal{C}^i} v_{i}(j)}{|\mathcal{C}^i|}$.


We are interested in four basic classes of valuation functions. Namely, for any pair of agents $i,j \in N$, we consider:
{\em General valuations}: $v_{i}(j) \in [-1, 1]$;
{\em Non-negative valuations}: $v_{i}(j) \in [0, 1]$;
{\em Duplex valuations}: $v_{i}(j) \in \{-1, 0, 1\}$;
{\em Simple valuations}: $v_{i}(j) \in \{0, 1\}$.
In every case, we assume that $v_{i}(i) = 0$, for every $i \in N$. Notice that any valuation function can be represented by using values in the range $[-1, 1]$.

Agents are self-interested entities. Thus, they may strategically
misreport their valuation functions in order to maximize their
utilities. 
Let $\mathbf{d}$ denote the
preferences (valuation functions) declared by all the agents. 

A deterministic mechanism
$\mathcal{M}$ maps every set (or list) of preferences $\mathbf{d}$ to a set of disjoint coalitions $\mathcal{M}(\mathbf{d}) \in
\mathscr{C}$.
We denote by $\mathcal{M}^i(\mathbf{d})$ the coalition assigned
to agent $i$ by $\mathcal{M}$. The utility of agent $i$ is given
by $u_i(\mathcal{M}(\mathbf{d}))$.
Let $\mathbf{d}_{-i}$ be the valuation functions declared by all agents except agent
$i$ and $d_i$ be a possible declaration of valuation function by $i$. A deterministic mechanism $\mathcal{M}$ is \textit{strategyproof} if
for any $i \in N$, any list of preferences $\mathbf{d}_{-i}$, any $v_i$ and
any $d_i$, it holds that $u_i(\mathcal{M}(\mathbf{d}_{-i}, v_i))
\geq u_i(\mathcal{M}(\mathbf{d}_{-i}, d_i))$. In other words, a strategyproof mechanism prevents any agent $i$ from benefiting by declaring a valuation different from $v_i$, whatever the other declared valuations are.

A randomized mechanism $\mathcal{M}$ maps every set of agents' preferences
$\mathbf{d}$ to a distribution $\Delta$ over the set of all the possible outcomes $\mathscr{C}$. The expected utility of agent $i$ is given by $ \mathbb{E}[u_i(\mathcal{M}(d))] =
\mathbb{E}_{\mathcal{C} \sim \Delta}[u_i(\mathcal{C})]$.
A randomized mechanism $\mathcal{M}$ is strategyproof (in expectation)
if for any $i \in N$, any preferences $\mathbf{d}_{-i}$, any $v_i$
and any $d_i$, $\mathbb{E}[u_i(\mathcal{M}(\mathbf{d}_{-i}, v_i))] \geq
\mathbb{E}[u_i(\mathcal{M}(\mathbf{d}_{-i}, d_i))]$. 


In this paper, we are interested in strategyproof
mechanisms that perform well with respect to the goal of maximizing the
classical utilitarian social welfare, that is, the sum of the utilities achieved by all the agents. Namely,
the social welfare of a given outcome $\mathcal{C}$
is $SW(\mathcal{C})= \sum_{i \in N} u_i(\mathcal{C})$.
We denote by $SW(C)=\sum_{i \in C} u_i(\mathcal{C})$ the overall
social welfare achieved by the agents belonging to a given coalition $C$. We
measure the performance of a mechanism by comparing the social
welfare it achieves with the optimal one. More precisely, the
approximation ratio of a deterministic mechanism $\mathcal{M}$ is
defined as $r^{\mathcal{M}} = \sup_{\mathbf{d}} \frac{
\mathtt{OPT}(\mathbf{d}) }{SW(\mathcal{M}(\mathbf{d}))}$, where $\mathtt{OPT}(\mathbf{d})$
is the social welfare achieved by an optimal set of coalitions in the
instance induced by $\mathbf{d}$.
For randomized mechanisms, the
approximation ratio is computed with respect to the expected
social welfare, that is
$r^{\mathcal{M}} = \sup_{\mathbf{d}}
\frac{ \mathtt{OPT}(\mathbf{d})}{\mathbb{E} [
SW(\mathcal{M}(\mathbf{d}))] }$.

We say that a deterministic mechanism $\mathcal{M}$ is
\textit{acceptable} if it always guarantees a non negative social
welfare, i.e., $SW(\mathcal{M}(\mathbf{d})) \geq 0$ for any
possible list of preferences $\mathbf{d}$. Similarly, a randomized
mechanism $\mathcal{M}$ is \textit{acceptable} if $\mathbb{E} [
SW(\mathcal{M}(\mathbf{d}))] \geq 0$ holds for every $\mathbf{d}$. In the following, we will always implicitly restrict to acceptable
mechanisms. In fact, a simple acceptable strategyproof mechanism for
all the considered classes of valuations can be trivially obtained
by putting every agent in a separate singleton coalition, regardless
of all the declared valuations.





\noindent{\bf Graph representation.} 
ASHGs and FHGs have a
 very intuitive graph representation. In fact, any instance of the games can be expressed by a
 weighted directed graph $G=(V,E)$, where nodes in $V$ represent the agents, and arcs or directed edges are associated to non null valuations. Namely, if $v_i(j) \neq 0$, an arc $(i,j)$ is contained in $E$ of weight  $w(i,j) = v_i(j)$.
As an example, in case of simple valuations, if $(i,j) \notin E$ then $v_{i}(j)=0$, while if $(i,j) \in E$ then $w(i,j) = v_i(j)=1$.


Throughout the paper we will sometimes describe an instance of the considered game by its graph representation. In the following sections, we provide our results for all of the four considered classes of valuation functions.

\section{General valuations}\label{sec:General:valuations_results}
In this section, we consider the setting where agents have general valuations. 
We are able to prove that there is no randomized strategyproof
mechanism with bounded approximation ratio both for ASHGs and FHGs. 
Clearly, the theorem applies also to deterministic mechanisms, since they are special cases of randomized ones.

\begin{theorem}\label{LB:General_and_randomized:HGandFHG}
For general valuation functions, there is no randomized strategyproof acceptable mechanism with bounded approximation ratio both for ASHGs and FHGs.
\end{theorem}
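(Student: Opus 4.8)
The plan is to construct a small family of instances where strategyproofness forces any acceptable mechanism into a dilemma: either it must separate a pair of agents who could contribute a large positive social welfare, or it is vulnerable to a profitable deviation. The key leverage is that with general valuations an agent can simultaneously declare a large negative value toward some agent and a large positive value toward another. Since the mechanism must be acceptable (nonnegative expected social welfare) regardless of the declaration, the presence of a possible "poison pill" declaration — where one agent secretly hates everyone — constrains how the mechanism can ever group agents together.

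**First I would** fix the number of agents (likely $n=2$ or $n=3$ suffices) and set up two profiles that are indistinguishable to the mechanism from some agent's perspective. Consider agents $1$ and $2$. In the true profile, both value each other at $+1$, so the optimal outcome places them together for social welfare $2$. Now consider a deviation profile in which agent $1$ instead declares $v_1(2)=-1$ while agent $2$ still declares $v_2(1)=+1$. In this deviated instance the coalition $\{1,2\}$ has social welfare $-1+1=0$, and acceptability together with the structure of the instance should pin down what the mechanism must do (for instance, it cannot place them together if doing so would force a negative welfare component elsewhere). The goal is to arrange the off-diagonal declarations so that acceptability forces agent $1$ and $2$ into separate singletons in the deviated instance, giving each of them utility $0$.

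**The crux of the argument** is the strategyproofness comparison across these two profiles. If in the true profile the mechanism ever places $1$ and $2$ together (which it must, with positive probability, to achieve any bounded approximation, since separating them yields social welfare $0$ against optimum $2$), then I compare agent $1$'s expected utility in the true profile against her utility in the deviated profile. Strategyproofness requires that agent $1$ cannot gain by switching from her true declaration to the deviating one, but I will have engineered the deviated instance so that the deviation is in fact weakly profitable for $1$ (because being grouped gives her positive utility that she would otherwise forgo) — or, symmetrically, that truthful reporting is punished. Pushing the gap between the two declared valuations to the extremes of $[-1,1]$ lets me make any positive approximation ratio impossible: to beat an unbounded ratio the mechanism must group the agents with some fixed probability bounded away from zero, yet any such fixed probability yields a concrete incentive violation in one of the two profiles. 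Since the FHG utility is just the ASHG utility divided by a positive coalition size, the identical construction handles both models at once.

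**The main obstacle** I anticipate is making the chain of inequalities airtight in the randomized case: I must ensure that the probability with which the mechanism groups the two agents is forced to be simultaneously bounded below (by the unbounded-approximation requirement) and bounded above (by strategyproofness and acceptability), and that these two bounds are genuinely contradictory rather than merely tight. Carefully choosing a third "threat" agent whose declared valuations make acceptability bite hard — so that grouping $1$ and $2$ becomes socially costly in the deviated profile — is likely where the real work lies, and I would verify the construction first for deterministic mechanisms before lifting it to randomized ones via an expectation argument over the induced distribution.
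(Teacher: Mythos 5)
There is a genuine gap here --- in fact two. First, the incentive direction in your two-profile family is backwards. Your deviator (agent $1$, true value $+1$ for agent $2$) lies by declaring $-1$, and the intended response of the mechanism to that lie is separation; but separation gives agent $1$ utility $0$, while truthful reporting gives her nonnegative expected utility (strictly positive whenever the mechanism groups the pair with positive probability, which you yourself argue it must). So the deviation you describe is never profitable and generates no strategyproofness constraint at all: the mechanism ``group $1$ and $2$ if and only if both declare $+1$ for each other, otherwise all singletons'' is strategyproof, acceptable, and essentially optimal on your instances. Relatedly, acceptability does not force singletons in your deviated profile: a coalition of declared welfare $0$ satisfies $SW \geq 0$, so the ``poison pill'' step does not bind, and no choice of third agent fixes the fact that an agent who \emph{likes} her partner has nothing to gain from exaggerated dislike. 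The paper's proof inverts the roles: the deviator is agent $2$, whose \emph{true} value for agent $3$ is mildly negative ($-\epsilon$) while agent $3$ values her at $0.9$; the profitable lie is to exaggerate $-\epsilon$ to $-1$ so as to escape the coalition. Strategyproofness then forces the grouping probability $q$ under the mild declaration to satisfy $q \leq p$, where $p$ is the grouping probability under the extreme declaration, and bounded approximation on the extreme instance (where grouping is welfare-negative and the optimum is the tiny $\epsilon$-edge to agent $1$) forces $p < 10\epsilon$; hence the mechanism forfeits essentially all of the welfare $0.9$ in the mild instance, giving ratio at least $0.9/(10\epsilon)$, unbounded as $\epsilon \to 0$.

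Second, your stated plan of pushing declarations ``to the extremes of $[-1,1]$'' cannot yield unboundedness even if the incentives were repaired. Profiles whose values all lie in $\{-1,0,1\}$ are duplex instances, and the paper itself exhibits a strategyproof acceptable mechanism (Mechanism $\mathcal{M}_2$, Theorem \ref{UB:duplex:HG}) with approximation $O(n^2)$ for ASHGs and $O(n)$ for FHGs --- a constant for your fixed $n \in \{2,3\}$ --- so no family of extreme-valued instances on a bounded number of agents can force an unbounded ratio. The unboundedness in Theorem \ref{LB:General_and_randomized:HGandFHG} is intrinsically a continuum phenomenon: the arbitrarily small interior values ($\epsilon$ and $-\epsilon$) create two strategyproofness-linked instances whose optima ($\epsilon$ versus $0.9$) differ by an arbitrarily large factor, while the mechanism is forced to behave nearly identically on both. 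Any correct proof must exploit such arbitrarily small values; your construction, as sketched, provably cannot.
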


\begin{proof}
We prove the claim only for ASHGs. However,
the same arguments directly apply also to FHGs.

\begin{figure}[h]%
        \centering
        \begin{subfigure}[b]{0.40\textwidth}
            \begin{tikzpicture}[scale=0.06,->,>=stealth',shorten >=1pt,auto,node distance=1.3cm,
            thick,main node/.style={circle,draw,font=\sffamily\Large\bfseries}]
            \node[main node] (1) {1};
            \node[main node] (2) [right of=1] {2};
            \node[main node] (3) [right of=2] {3};
            \path[every node/.style={font=\sffamily\small}]
            (1) edge [bend right] node  {$\epsilon$} (2)
            (2) edge [bend right]  node {$-1$} (3)
            (3) edge [bend right]  node[above] {$0.9$} (2);
            \end{tikzpicture}
            \caption{Instance $I_1$}
            \label{fig:LB_general_valuations_a}
        \end{subfigure}
        \begin{subfigure}[b]{0.40\textwidth}
            \begin{tikzpicture}[scale=0.06,->,>=stealth',shorten >=1pt,auto,node distance=1.3cm,
            thick,main node/.style={circle,draw,font=\sffamily\Large\bfseries}]
            \node[main node] (1) {1};
            \node[main node] (2) [right of=1] {2};
            \node[main node] (3) [right of=2] {3};
            \path[every node/.style={font=\sffamily\small}]
            (1) edge [bend right] node  {$\epsilon$} (2)
            (2) edge [bend right]  node {$-\epsilon$} (3)
            (3) edge [bend right]  node[above] {$0.9$} (2);
            \end{tikzpicture}
            \caption{Instance $I_2$}
            \label{fig:LB_general_valuations_b}
        \end{subfigure}
        \caption{The lower bound instance for general valuations.}
        \label{fig:LB_general_valuations}
    \end{figure}
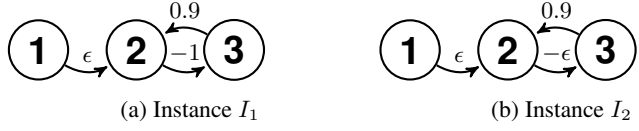

Let $\mathcal{M}$ be a given randomized strategyproof mechanism.
Provided that $\mathcal{M}$ is strategyproof, we implicitly assume
that the agents' declared preferences
$\mathbf{d}$ correspond to the true valuation functions. Let us then
consider the instance $I_1$ depicted in
Figure~\ref{fig:LB_general_valuations_a}, and let $p$ be the
probability that $\mathcal{M}$ returns an outcome for $I_1$ where
agents $2$ and $3$ are together in the same coalition.
Then, the expected social welfare is $E[SW(\mathcal{M}(\mathbf{d}))]
\leq p(\epsilon - 0.1)+(1-p)\epsilon=\epsilon - 0.1p$, while the
optimal solution has social welfare $\epsilon$. Therefore, the
randomized mechanism has bounded approximation ratio only if
$\epsilon - 0.1p >0$, that implies $p<10\epsilon$. Let us now
consider the instance $I_2$ depicted in
Figure~\ref{fig:LB_general_valuations_b}, and let $q$ be the
probability that mechanism $\mathcal{M}$ returns an outcome where
agents $2$ and $3$ are together in the same coalition. Then the
expected social welfare is $E[SW(\mathcal{M}(\mathbf{d}))] \leq 0.9q
+ (1-q)\epsilon$. We notice that $\mathcal{M}$ can be strategyproof
only if $p \geq q$, otherwise agent $2$ could improve her utility by
declaring value $-1$ for agent $3$, since in such a case she would
get utility $-p\epsilon > -q\epsilon$. The optimal solution of
instance $I_2$ has value $0.9$. Thus, the approximation ratio of
$\mathcal{M}$ is
$\frac{\mathtt{OPT}(\mathbf{d})}{E[SW(\mathcal{M}(\mathbf{d}))] }
\geq \frac{0.9}{0.9q + (1-q)\epsilon} \geq \frac{0.9}{0.9q +
\epsilon} \geq \frac{0.9}{10\epsilon}$. As $\epsilon$ can be
arbitrarily small, we can then conclude that $\mathcal{M}$ has an
unbounded approximation ratio. The claim then follows by the
arbitrariness of $\mathcal{M}$.
\end{proof}

\section{Non-negative valuations}\label{sec:Non-negative:valuations_results}
In this section, we consider the setting where agents have non-negative valuations. 
Let us first present a simple optimal mechanism for non-negative valuations in ASHGs.

\begin{mechanism}{$\mathcal{M}_1$}\label{mech:grandCoalition}
    Given as input a list of agents' valuations $\mathbf{d}=\langle d_1,...,d_n\rangle$, the mechanism outputs the grand coalition, i.e. $\mathcal{M}(\mathbf{d}) = \{ \{1,\ldots,n\} \}$.
\end{mechanism}

It is trivial to see that, in ASHGs with non-negative
valuations, the above mechanism $\mathcal{M}_1$ is acceptable,
strategyproof, and achieves the optimal social welfare. Therefore,
we now focus on FHGs. We are able to show that any
deterministic strategyproof mechanism cannot have an approximation
better than $\frac{n}{2}$. 

\begin{theorem}\label{LB:NonNegative:FHG}
For FHGs with non-negative valuations, no
deterministic strategyproof acceptable mechanism can achieve approximation ratio $r$, with $r < \frac{n}{2}$.
\end{theorem}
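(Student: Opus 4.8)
The plan is to exploit the structural gap between the social optimum and what the grand coalition---the obvious strategyproof mechanism---can guarantee. First I would record the benchmark that fixes the constant $\frac{n}{2}$: for any coalition $C$ with $|C|\ge 2$ one has $\frac{w(C)}{|C|}\le \frac{w(C)}{2}$, where $w(C)=\sum_{i,j\in C}v_i(j)$ is the internal weight, so summing over a partition gives $\mathtt{OPT}(\mathbf{d})\le \frac{W}{2}$ with $W=\sum_{i,j}v_i(j)$, whereas the grand coalition attains exactly $\frac{W}{n}$. This identifies the extremal configuration I should aim at: an instance whose optimum forms disjoint mutually-valued pairs, with $\mathtt{OPT}\approx\frac{n}{2}$, so that any mechanism confined to grand-coalition-level welfare (a constant) already suffers ratio $\frac{n}{2}$. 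The real content is therefore to show that strategyproofness \emph{confines} every mechanism to constant welfare on such an instance, even though pairing up the agents would give $\Theta(n)$.

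To create the required instability I would give each agent \emph{two} potential partners rather than one---for instance a cycle (or a union of short paths) in which $v_i(i-1)=v_i(i+1)=1$ and all other values vanish. Here the optimum still pairs adjacent agents for welfare $\approx\frac{n}{2}$, but in a fractional hedonic game every agent strictly prefers a triple $\{i-1,i,i+1\}$ (utility $\frac{2}{3}$) to being matched in a single pair (utility $\frac{1}{2}$): the social optimum is individually unappealing, and this mismatch is exactly what a manipulation can exploit.

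Next I would run the standard probing argument. Assuming a strategyproof acceptable $\mathcal{M}$ with ratio $r<\frac{n}{2}$, its ratio guarantee on the base instance forces it to realize most of the optimal pair-welfare, hence to match up a constant fraction of the contested agents. I would then fix one such matched agent $i$ and build a companion instance differing only in her declaration; comparing $\mathcal{M}$ on the two instances through the strategyproofness inequality $u_i(\mathcal{M}(\mathbf{d}_{-i},v_i))\ge u_i(\mathcal{M}(\mathbf{d}_{-i},d_i))$ shows that the welfare-improving pairing she was granted lets her profitably misreport, or else that her guaranteed coalition in the companion instance is so diluted that the ratio bound there fails. Iterating over the contested agents pushes the achievable welfare down to a constant while $\mathtt{OPT}$ stays $\Theta(n)$, yielding ratio $\ge\frac{n}{2}$ in the limit.

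The main obstacle is precisely this chaining. Because non-negative valuations let an agent only \emph{lower} declared values---she can neither fabricate a negative preference, as in the general-valuation construction of Theorem~\ref{LB:General_and_randomized:HGandFHG}, nor directly force another agent into or out of her coalition---the mere desire for a larger coalition is not itself a manipulation; it must be routed through the mechanism's own ratio-forced choices on the companion instances. Designing those companions so that the misreport is provably profitable against an \emph{arbitrary} $\mathcal{M}$, and so that the accumulated strategyproofness inequalities pin the constant at exactly $\frac{n}{2}$ rather than a weaker $\Theta(n)$ bound such as $\frac{n}{4}$, is the delicate part; the welfare accounting through $\mathtt{OPT}\le\frac{W}{2}$ is what keeps the target constant sharp.
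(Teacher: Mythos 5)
Your high-level plan (build a cycle so that any outcome other than the grand coalition leaves some agent with zero utility, then convert that agent's dissatisfaction into a profitable misreport via the mechanism's own ratio guarantee) is the right genre, but two concrete choices break it. First, the unit-weight cycle $v_i(i-1)=v_i(i+1)=1$ cannot yield $\frac{n}{2}$: pairing adjacent agents gives $\mathtt{OPT}=\frac{n}{2}$ while the grand coalition already achieves welfare $2$, i.e.\ ratio $\frac{n}{4}$, so a mechanism that always outputs the grand coalition satisfies $r<\frac{n}{2}$ on your instance and no deviation pressure ever arises; the observation that agents prefer triples (utility $\frac{2}{3}$ over $\frac{1}{2}$) is not a manipulation at all, since strategyproofness constrains misreports, not preferences over outcomes. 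Worse, setting the true values to the maximum $1$ destroys the only available lever: your claim that non-negative valuations let an agent ``only lower declared values'' is an artifact of this choice, not a fact about the model. The paper's proof runs on exactly the lever you discarded: the true weights are tiny and leave slack up to the cap ($\frac{1}{n}\gg\alpha\gg\beta$, placed \emph{alternately} $\alpha,\beta,\alpha,\beta,\ldots$ around a directed cycle in which each agent values only her successor). Any non-grand-coalition outcome cuts the cycle, so some agent $k$ has zero utility; she can exaggerate by declaring $d_k(k+1)=1$, and in the deviated instance this single edge dwarfs everything else ($\mathtt{OPT}\ge\frac{1}{2}$ versus at most $O(n\alpha)$ achievable without the pair, which falls below the $\frac{1}{n}$ welfare needed for ratio $<\frac{n}{2}$ once $\alpha\ll n^{-2}$), so the mechanism is \emph{forced} to put $k$ and $k+1$ together, giving $k$ strictly positive true utility and violating strategyproofness. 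Hence the mechanism must output the grand coalition on the base instance --- one deviation argument, no chaining over agents --- and the alternation $\alpha\gg\beta$ is what sharpens the constant: $\mathtt{OPT}$ pairs along the $\alpha$-edges for $\frac{n\alpha}{4}$ while the grand coalition earns only $\frac{\alpha+\beta}{2}\to\frac{\alpha}{2}$, so the ratio tends to $\frac{n}{2}$; a uniform-weight cycle like yours tops out at $\frac{n}{4}$.

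Second, your probing step starts from a false premise: a ratio-$(\frac{n}{2}-\epsilon)$ mechanism is \emph{not} forced to ``realize most of the optimal pair-welfare'' on the base instance --- the grand coalition already meets the ratio requirement there (which is precisely why the bound is only approached in the limit $\beta/\alpha\to 0$) --- so no matched ``contested agents'' can be extracted and the subsequent iteration has nothing to chain on. The missing idea, in short, is the exaggeration-forcing step: keep the true values far below the cap $1$, let the zero-utility agent declare the maximum, and let the approximation guarantee itself compel the pairing that rewards the liar.
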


\begin{proof}

\begin{figure}[h]%
    \centering
    \begin{subfigure}[b]{0.40\textwidth}
        \begin{tikzpicture}[->,>=stealth',shorten >=1pt,auto,node distance=1.5cm,
        thick,main node/.style={circle,draw,font=\sffamily\Large\bfseries}]
        \node[main node] (1) {1};
        \node[main node] (2) [below right of=1] {2};
        \node[main node] (3) [below left of=2] {3};
        \node[main node] (4) [above left of=3] {4};
        \path[every node/.style={font=\sffamily\small}]
        (1) edge node {$\alpha$} (2)
        (2) edge node {$\beta$} (3)
        (3) edge node {$\alpha$} (4)
        (4) edge node {$\beta$} (1);
        \end{tikzpicture}
        \caption{Instance $I_1$}
        \label{fig:LB_Non-negative_valuations_a}
    \end{subfigure}
        \begin{subfigure}[b]{0.40\textwidth}
            \begin{tikzpicture}[->,>=stealth',shorten >=1pt,auto,node distance=1.5cm,
            thick,main node/.style={circle,draw,font=\sffamily\Large\bfseries}]
            \node[main node] (1) {1};
            \node[main node] (2) [below right of=1] {2};
            \node[main node] (3) [below left of=2] {3};
            \node[main node] (4) [above left of=3] {4};
            \path[every node/.style={font=\sffamily\small}]
            (1) edge node {$\alpha$} (2)
            (2) edge node {$\beta$} (3)
            (3) edge node {$\alpha$} (4)
            (4) edge node {$1$} (1);
            \end{tikzpicture}
            \caption{Instance $I_2$}
            \label{fig:LB_Non-negative_valuations_b}
        \end{subfigure}
    \caption{The lower bound instances for non-negative valuations with $4$ agents.}
    \label{fig:LB_Non-negative_valuations}
\end{figure}
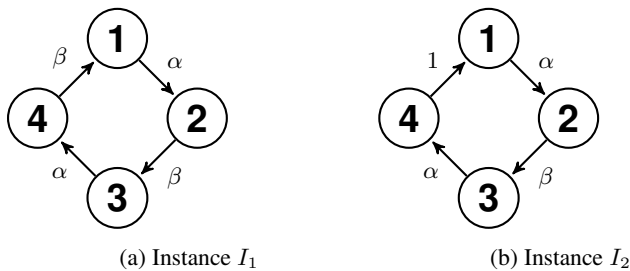

Assume $\frac{1}{n} \gg \alpha \gg \beta$.  Let us consider the instance $I_1$
with an even number $n$ of agents, where the valuation functions are
as follows:
\begin{itemize}
    \item for any $i= 1, 3, \ldots, n - 1$, $v_i(j) = \alpha$ if $j=i+1$ and $v_i(j)=0$ otherwise;
    \item for any $i=2, 4, \ldots, n-2$, $v_i(j) = \beta$ if $j=i+1$ and $v_i(j)=0$ otherwise;
    \item $v_n(1) = \beta$ and $v_n(j) = 0 $ for any $j\neq 1$.
\end{itemize}
The optimal outcome is given by the set of coalitions  
$\mathcal{C} = \{C_1,C_2,$ \\
$\ldots,C_{\frac{n}{2}}\}$, where $C_j=\{2j-1,2j\}$ for
any $j=1,\ldots,\frac{n}{2}$, and achieves social welfare
$\frac{n}{4}\alpha$. We now show that any deterministic
strategyproof mechanism with an approximation ratio lower than
$\frac{n}{2}$ has to output the grand coalition. In fact, the grand
coalition has social welfare $\frac{\alpha+\beta}{2}$, which has
approximation ratio tending to $\frac{n}{2}$ when $\beta/\alpha$
tends to $0$, thus proving the claim. Assume then that a
deterministic strategyproof mechanism $\mathcal{M}$ with
approximation ratio strictly less than $\frac{n}{2}$ outputs an
outcome different from the grand coalition. In this case, there
must be at least one agent $k$ having null utility. But then $k$ might improve her utility by declaring $v_k(k+1)=1$, 
as in this case $\mathcal{M}$, since $\alpha \ll \frac{1}{n}$, in order to achieve approximation less than $\frac{n}{2}$ must give an outcome in which agents $k$ and $k+1$
are in the same coalition. Hence, agent $k$ improves her utility by
declaring $v_k(k+1)=1$. Therefore $\mathcal{M}$ for the instance $I_1$ has to output the grand coalition, thus proving the theorem.
\end{proof}

Given the above result, it is easy to show that, returning the grand coalition is the best we can do. 

\begin{proposition}\label{UB:NonNegative:FHG}
For FHGs with non-negative valuations,
Mechanism $\mathcal{M}_1$  is a deterministic strategyproof acceptable
 mechanism with approximation ratio $\frac{n}{2}$.
\end{proposition}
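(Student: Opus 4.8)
The plan is to show two things about Mechanism $\mathcal{M}_1$ on FHGs with non-negative valuations: that it is strategyproof and acceptable, and that its approximation ratio is exactly $\frac{n}{2}$. Strategyproofness and acceptability are immediate and I would dispatch them first: since $\mathcal{M}_1$ ignores the declared preferences entirely and always outputs the grand coalition, no agent's report can influence the outcome, so no agent can ever benefit from misreporting; and since all valuations are non-negative, every agent's utility is non-negative, hence $SW(\mathcal{M}_1(\mathbf{d})) \geq 0$. Combined with Theorem~\ref{LB:NonNegative:FHG}, which already establishes that no deterministic strategyproof acceptable mechanism can beat $\frac{n}{2}$, it then suffices to prove the matching upper bound: the grand coalition always achieves social welfare at least $\frac{2}{n}\,\mathtt{OPT}(\mathbf{d})$.

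For the upper bound the key step is to relate the social welfare of the grand coalition to that of the optimum. In the grand coalition every agent is in a cluster of size $n$, so the total social welfare is $SW_{\mathrm{grand}} = \frac{1}{n}\sum_{i \in N}\sum_{j \in N} v_i(j) = \frac{1}{n}\,W$, where $W := \sum_{i,j} v_i(j)$ is the total edge weight (using $v_i(i)=0$). The natural comparison is to bound $\mathtt{OPT}(\mathbf{d})$ from above in terms of $W$. Since any coalition $C$ in the optimal partition has size at least $2$ if it contributes positive welfare (singletons contribute $0$ under non-negative valuations), the contribution of $C$ to the optimum is $\frac{1}{|C|}\sum_{i,j \in C} v_i(j) \leq \frac{1}{2}\sum_{i,j \in C} v_i(j)$. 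Summing over the coalitions of the optimal partition, whose internal edges are disjoint and all have non-negative weight, gives $\mathtt{OPT}(\mathbf{d}) \leq \frac{1}{2}\sum_{i,j} v_i(j) = \frac{1}{2}W$. Hence $\mathtt{OPT}(\mathbf{d}) \leq \frac{1}{2}W = \frac{n}{2}\cdot\frac{1}{n}W = \frac{n}{2}\,SW_{\mathrm{grand}}$, which yields the ratio $\frac{n}{2}$.

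I would phrase the upper-bound argument cleanly by treating each optimal coalition separately, using that $\frac{1}{|C|} \leq \frac{1}{2}$ whenever $|C| \geq 2$ and that coalitions of size $1$ contribute $0$, so the $\frac{1}{2}$ bound holds uniformly across all contributing coalitions. The only subtlety — and the step I expect to require the most care — is that the optimum might place some weight outside the grand coalition's internal sum, but here there is no such issue because the grand coalition contains \emph{every} edge of the instance, so $\sum_{C}\sum_{i,j\in C} v_i(j) \leq \sum_{i,j\in N} v_i(j)$ holds automatically and non-negativity ensures nothing is lost. Putting the two bounds together, $r^{\mathcal{M}_1} \leq \frac{n}{2}$, and the tightness follows directly from Theorem~\ref{LB:NonNegative:FHG}, completing the proof.
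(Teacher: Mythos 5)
Your proposal is correct and follows essentially the same route as the paper: both argue that $\mathcal{M}_1$ is trivially strategyproof and acceptable because it ignores the declarations and valuations are non-negative, then bound $\mathtt{OPT}(\mathbf{d}) \leq \frac{1}{2}\sum_{i,j} v_i(j)$ via the observation that any positively contributing optimal coalition has size at least $2$ (so $\frac{1}{|C|} \leq \frac{1}{2}$), and compare this to the grand coalition's welfare $\frac{1}{n}\sum_{i,j} v_i(j)$. Your additional remarks on the disjointness of intra-coalition edges and the appeal to Theorem~\ref{LB:NonNegative:FHG} for tightness merely make explicit what the paper leaves implicit.
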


\begin{proof}
As valuations are non-negative and Mechanism $\mathcal{M}_1$ always
outputs the grand coalition, the mechanism is clearly acceptable and
strategyproof. Let us now focus on its approximation ratio for the
social welfare. Notice that, given any $\mathbf{d}$, then
$\mathtt{OPT}(\mathbf{d}) \leq \frac{\sum_{i \in N} \sum_{j \in N}
v_i(j)}{2}$. This is because any coalition in the optimal coalitions
with positive social welfare consists of at least two agents.
Otherwise, the coalition has zero social welfare since $v_i(i)=0$
for any $i \in N$. On the other hand the grand coalition has social
welfare equal to $\frac{\sum_{i \in N} \sum_{j \in N:} v_i(j)}{n}$.
The approximation ratio follows.
\end{proof}

\section{Duplex valuations}\label{sec:Duplex:valuations_results}
In this section, we consider the setting where agents have duplex valuations. 
We first present deterministic lower bounds for ASHGs and FHGs. 

\begin{theorem}\label{LB:Duplex:HG}
For ASHGs with duplex valuations, no deterministic strategyproof acceptable
 mechanism has approximation ratio less than $n-2$.
\end{theorem}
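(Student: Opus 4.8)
The plan is to mimic the two-instance strategyproofness argument of Theorem~\ref{LB:General_and_randomized:HGandFHG}, but adapted to the coarse duplex scale and, crucially, amplified across a linear number of agents so that the gap grows from a constant to $n-2$. Since duplex valuations forbid the graded weights ($\epsilon$, $0.9$) that drove the general-valuation proof, I would replace them by the extreme values in $\{-1,0,1\}$ and route a $\Theta(n)$ amount of welfare through a single pivotal grouping decision. Concretely, I would exhibit two instances $I_1$ and $I_2$ that differ only in one agent's declared edge, and argue that no strategyproof acceptable mechanism can do well on both: on at least one of them the social welfare is forced down to $O(1)$ while the optimum is $\Theta(n)$.

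For the construction I would use a hub/fan gadget. Take a hub agent that is liked (value $+1$) by a linear number of ``supporter'' agents, so that the coalition containing the hub is worth about $n-2$ in the optimum. Then introduce one ``swing'' agent that both likes the hub and is an enemy (value $-1$) of the supporters, and let $I_1$ and $I_2$ differ only in whether the swing agent declares neutrality ($0$) or enmity ($-1$) toward the supporters. The valuations would be tuned so that in one instance the optimal outcome requires placing the swing agent together with the hub (to realize all the $+1$ edges), whereas in the companion instance that same placement triggers the swing agent's $-1$ enmities and collapses the welfare; this is the duplex analogue of the pair $I_1,I_2$ in Theorem~\ref{LB:General_and_randomized:HGandFHG}.

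The forcing step would use the monotonicity that strategyproofness imposes, exactly as the step ``$p\ge q$'' in Theorem~\ref{LB:General_and_randomized:HGandFHG} but for a deterministic rule: a strategyproof mechanism cannot allow the swing agent to separate itself from the supporters merely by declaring enmity, for otherwise an agent who only \emph{mildly} wants to avoid them (here, one whose true type is the neutral declaration) could profitably lie. Combining this lock with acceptability ($SW\ge 0$) pins the mechanism, on at least one of the two instances, to an outcome that either activates the $-1$ enmities or forgoes the $\Theta(n)$ worth of $+1$ edges. Concluding that the ratio is at least $n-2$ would then be a direct welfare computation, comparing the mechanism's forced value against the optimum on the two instances.

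The main obstacle I expect is the amplification under the $\{-1,0,1\}$ restriction. In Theorem~\ref{LB:General_and_randomized:HGandFHG} a single edge of weight $0.9$ sufficed to create a constant gap; here I must make one pivotal grouping decision move the welfare by a factor of $n-2$, while keeping every declared value in $\{-1,0,1\}$ and keeping the optimum genuinely $\Theta(n)$ in \emph{both} instances (so that the bound is not diluted by an over-large companion optimum). The delicate points are verifying that the swing agent's deviation is \emph{strictly} improving for a deterministic mechanism, so that strategyproofness truly bites rather than leaving the mechanism an indifferent escape, and checking that the acceptability constraint cannot be exploited to dodge the dichotomy between triggering the enmities and discarding the friendships.
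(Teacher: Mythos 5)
Your high-level template is the right one --- the paper's proof of Theorem~\ref{LB:Duplex:HG} is indeed a two-instance argument in which a single agent's declaration changes, strategyproofness plus acceptability force one pivotal pairing, and that pairing caps the mechanism's welfare at $1$ against an optimum of $n-2$ --- but your concrete gadget fails, and the flaw lies in the design principle you state explicitly: that the optimum should remain $\Theta(n)$ in \emph{both} instances. In your construction (hub $h$ liked by $n-2$ supporters; swing $w$ with $v_w(h)=1$, declaring $0$ or $-1$ toward the supporters), the outcome ``supporters with the hub, swing isolated'' has welfare $n-2$ in \emph{both} instances, so a mechanism may simply output it in both: it is within a factor $1+O(1/n)$ of optimal each time, and the swing gets utility $0$ under either declaration, so she has no profitable deviation and strategyproofness never bites. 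Forgoing the swing's single $+1$ edge costs only $1$ out of $\Theta(n)$ welfare; nothing forces the pivotal pairing. This is exactly the ``indifferent escape'' you flagged as a delicate point but did not close, and with both optima $\Theta(n)$ it cannot be closed.

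The paper closes it by inverting your desideratum: the agent who changes her declaration is the hub herself, and the first instance's optimum is made \emph{tiny}, not $\Theta(n)$. In the paper's $I_1$, agent $n-1$ (valued $+1$ by agents $1,\dots,n-2$) declares $v_{n-1}(n)=1$ and $v_{n-1}(j)=-1$ for every supporter $j$, while a separate fixed agent $n$ declares $-1$ toward every supporter. These $-1$'s annihilate the supporters' edges: every outcome other than the pairing $\{n-1,n\}$ (with no supporters attached) has social welfare at most $0$, and the optimum is exactly $1$. Hence any acceptable mechanism with bounded ratio is forced to output that pairing on $I_1$ --- the forcing step your gadget lacks. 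In $I_2$, agent $n-1$'s true type is neutral toward the supporters; if the mechanism fails to pair her with $n$, she gets $0$ and profitably lies by reporting her $I_1$ valuation, gaining true utility $1>0$ (which settles your strictness worry). Finally, the fixed enemy $n$ is what caps the welfare in $I_2$: in any coalition containing $\{n-1,n\}$, each supporter who joins contributes $+1$ but costs $-1$ through $n$, pinning the welfare to $1$, while the optimum $\{1,\dots,n-1\},\{n\}$ attains $n-2$. Your swing agent merges the roles of $n-1$ and $n$, and since her enmity vanishes in the neutral instance, no such cap survives there --- so even if you could force the pairing, the $n-2$ ratio would still not follow.
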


\begin{proof}
Let us consider the instance $I_1$ depicted in
Figure~\ref{fig:LB_simple_valuations_a}, where the valuations of the $n$
agents are as follows: \\
- for $i=1, \ldots, n-2$, $v_i(j) = 1$ if $j=n-1$ and $v_i(j) = 0$ otherwise;\\
- $v_{n-1}(j) = 1$ if $j = n$ and $v_{n-1}(j) = -1$ otherwise; \\
- $v_n(j) = -1$ for $j = 1, \ldots, n-2$ and $v_n(n-1)=0$. \\


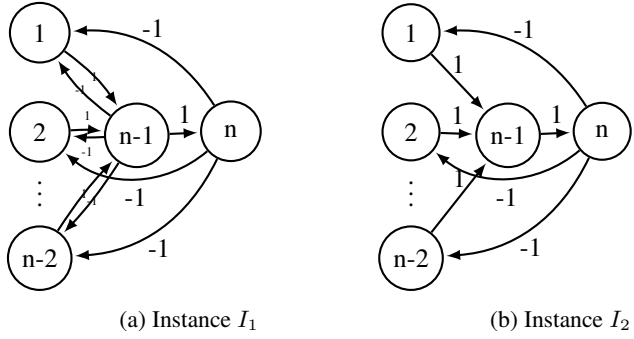
\begin{figure}[h]
\centering
    \begin{subfigure}[b]{0.40\textwidth}
     \begin{tikzpicture}[scale=0.12,->,>=stealth',shorten >=1pt,auto,node distance=1.1cm,
     thick,main node/.style={circle,draw,minimum width={width("assas")+2pt}}]
     \node[main node] (n-2) at (90:5) {n-2};
     \node[main node] (2) at (90:19) {2};
     \node[main node] (1) at (90:30) {1};
     \node[main node] (n-1) at (60:21.5) {n-1};
     \node[main node] (n) at (42:28.5) {n};
     \node[] at (90:13) {$\vdots$};
     \draw[-latex] (1) to[bend left=10] node[anchor=center] {\tiny 1} (n-1);
     \draw[-latex] (n-1) to[bend left=10] node[anchor=center] {\tiny -1} (1);
     \draw[-latex] (2) to[bend left=5] node[above] {\tiny 1} (n-1);
     \draw[-latex] (n-1) to[bend left=5] node[below] {\tiny -1} (2);
     \draw[-latex] (n-2) to[bend left=5] node[anchor=center] {\tiny 1} (n-1);
     \draw[-latex] (n-1) to[bend left=5] node[anchor=center] {\tiny -1} (n-2);
     \draw[-latex] (n-1) to node[above] {1} (n);
     \draw[-latex] (n) to[bend right=30] node[above] {-1} (1);
     \draw[-latex] (n) to[bend left=40] node[below] {-1} (2);
     \draw[-latex] (n) to[bend left=30] node[below] {-1} (n-2);
     \end{tikzpicture}
    \caption{Instance $I_1$}
        \label{fig:LB_simple_valuations_a}
    \end{subfigure}
    \begin{subfigure}[b]{0.40\textwidth}
    \begin{tikzpicture}[scale=0.12,->,>=stealth',shorten >=1pt,auto,node distance=1.1cm,
    thick,main node/.style={circle,draw,minimum width={width("assas")+1pt}}]
    \node[main node] (n-2) at (90:5) {n-2};
    \node[main node] (2) at (90:19) {2};
    \node[main node] (1) at (90:30) {1};
    \node[main node] (n-1) at (60:21.5) {n-1};
    \node[main node] (n) at (42:28.5) {n};
    \node[] at (90:13) {$\vdots$};
    \draw[-latex] (1) to node[above] {1} (n-1);
    \draw[-latex] (2) to node[above] {1} (n-1);
    \draw[-latex] (n-2) to node[above,midway] {1} (n-1);
    \draw[-latex] (n-1) to node[above] {1} (n);
    \draw[-latex] (n) to[bend right=30] node[above] {-1} (1);
    \draw[-latex] (n) to[bend left=40] node[below] {-1} (2);
    \draw[-latex] (n) to[bend left=30] node[below] {-1} (n-2);
    \end{tikzpicture}
     \caption{Instance $I_2$}
        \label{fig:LB_simple_valuations_b}
    \end{subfigure}
     \caption{The lower bound instance for duplex valuations.}
    \label{fig:LB_simple_valuations}
\end{figure}

In the optimal outcome agents $n-1$ and $n$ are in the same
coalition and all other agents are in different coalitions. The
resulting social welfare is $1$, and in particular it is due to
agent $n-1$ having utility $1$. It is easy to see that any mechanism
having bounded approximation has to return the optimal outcome, as
any other solution would have social welfare at most zero. Let us
now consider the other instance $I_2$ depicted in
Figure~\ref{fig:LB_simple_valuations_b}, where agent $n-1$ is the
only one with a different valuation function with respect to $I_1$,
that is $v_{n-1}(n) = 1$ and $v_{n-1}(j) = 0$ for $j\neq n$. Any
strategyproof mechanism with bounded approximation ratio for $I_2$
has to put agents $n-1$ and $n$ in the same coalition, otherwise
$n-1$ would have null utility and could increase her utility by
declaring her valuation function as it is in instance $I_1$.
Moreover, any outcome in which $n-1$ and $n$ are together,
independently from the other coalitions, has social welfare $1$.
However, the optimal outcome, by putting $1,2,\ldots,n-1$ all
together in a same coalition and agent $n$ alone, achieves social
welfare $n-2$. This proves the $n-2$ lower bound for any
deterministic strategyproof mechanism.
\end{proof}

\begin{theorem}\label{LB:Duplex:FHG}
For FHGs with duplex valuations, no deterministic strategyproof acceptable mechanism can achieve approximation $2 - \epsilon$,
for any $\epsilon>0$.
\end{theorem}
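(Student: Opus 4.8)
The plan is to reuse the two-instance technique of the preceding theorems, constructing a pair of duplex FHG instances $I_1$ and $I_2$ that differ only in the declaration of a single \emph{pivotal} agent $z$, and to exploit the fractional (size-normalized) utilities to turn a one-to-one ``pairing'' threat into a factor approaching $2$. First I would fix the gadget. Besides $z$, I introduce a \emph{target} agent $Y$ and $t$ \emph{admirers} $a_1,\dots,a_t$, so that $n=t+2$. I set $v_z(Y)=1$ but $v_Y(z)=0$, so that the pair $\{z,Y\}$ yields total utility only $\tfrac12$; I let every admirer satisfy $v_{a_i}(z)=1$ and $v_{a_i}(Y)=-1$, and I make $Y$ and the admirers neutral to everyone else. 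The true instance $I_2$ additionally sets $v_z(a_i)=0$ for all $i$, whereas in the threat instance $I_1$ agent $z$ instead declares $v_z(a_i)=-1$. All values lie in $\{-1,0,1\}$, as required.

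Second, I would analyze $I_1$. Here the only positive edges are the admirers' edges into $z$ and the edge $z\to Y$; because $z$ now dislikes the admirers and the admirers dislike $Y$, every coalition that places $z$ together with $Y$ and at least one admirer has welfare $\le 0$ (the $+1$ and $-1$ contributions cancel in the numerator), while every outcome that does not put $z$ and $Y$ together has welfare $0$. Consequently $\mathtt{OPT}(I_1)=\tfrac12$, attained only by the isolated pair $\{z,Y\}$, and any mechanism with finite approximation ratio must achieve positive welfare on $I_1$, which is possible only if $z$'s coalition is exactly $\{z,Y\}$. In particular $z$'s \emph{true} utility from $\mathcal{M}(I_1)$ (evaluated with her $I_2$ valuation) equals $\tfrac12$.

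Third, I would push this through strategyproofness. Since $I_1$ and $I_2$ differ only in $z$'s report, strategyproofness forces $u_z(\mathcal{M}(I_2))\ge u_z(\mathcal{M}(I_1))=\tfrac12$, with utilities taken with respect to $z$'s true valuation. As $z$ truly values only $Y$, a utility of at least $\tfrac12$ is attainable only if $z$ lies in a coalition of size at most $2$ containing $Y$; hence $\mathcal{M}(I_2)$ must again output the pair $\{z,Y\}$, whose social welfare is only $\tfrac12$ (the admirers, separated from $z$, contribute nothing). On the other hand the partition that groups $z$ with all admirers and isolates $Y$ has welfare $\tfrac{t}{t+1}$, so $\mathtt{OPT}(I_2)\ge \tfrac{t}{t+1}$ and the ratio on $I_2$ is at least $\tfrac{2t}{t+1}$. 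Choosing $t$ large enough makes this exceed $2-\epsilon$, contradicting the assumed ratio and proving the claim.

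The step I expect to be the crux is the design of the threat instance $I_1$ so that a clever mechanism cannot \emph{route around} the pivotal agent---that is, isolate $z$ while still collecting the valuable edges elsewhere---which is exactly what defeats a naive star construction in which the shared center can simply be paired with a different leaf. The remedy is the two-sided cancellation built into the gadget: making $z$ the unique object of the admirers' affection, and having the admirers actively dislike $Y$, ensures that the only way to obtain strictly positive welfare on $I_1$ is the isolated pair $\{z,Y\}$, which is precisely the low-welfare configuration that strategyproofness then transplants onto $I_2$. Verifying the remaining routine inequalities---enumerating the welfare of every coalition type mixing $z$, $Y$ and admirers, and checking $\mathtt{OPT}(I_2)=\tfrac{t}{t+1}$---is straightforward.
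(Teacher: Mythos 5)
Your proposal is correct and takes essentially the same route as the paper: the paper simply reuses the two-instance threat gadget of Theorem~\ref{LB:Duplex:HG} reinterpreted fractionally, and your agents $z$, $Y$, $a_1,\dots,a_t$ are exactly its agents $n-1$, $n$, $1,\dots,n-2$, with the cancelling $-1$ edges between admirers and target merely reversed in direction. In both arguments strategyproofness forces the mechanism to output the pair $\{z,Y\}$ of welfare $\frac{1}{2}$ on the truthful instance, whose optimum $\frac{t}{t+1}=\frac{n-2}{n-1}$ drives the ratio to $2$.
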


\begin{proof}
The proof is very similar to Theorem~\ref{LB:Duplex:HG}, but here
the optimal solution has value $\frac{n-2}{n-1}$ and the best
strategyproof acceptable mechanism returns an outcome of social
welfare $\frac{1}{2}$. It follows that for big value of $n$, the ratio tends to $2$,
and thus proving the theorem.
\end{proof}

We are also able to prove the following randomized lower bound.

\begin{theorem}\label{LB:Duplex_and_randomized:HG}
For ASHGs with duplex valuations, no randomized
strategyproof acceptable mechanism can achieve approximation $2 -
\epsilon$, for any $\epsilon>0$.
\end{theorem}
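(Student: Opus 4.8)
The plan is to run the probabilistic two-instance argument of Theorem~\ref{LB:General_and_randomized:HGandFHG} inside the duplex model. I fix an arbitrary randomized strategyproof acceptable mechanism $\mathcal{M}$ and build two instances $I_1,I_2$ that are identical except for the declaration of one \emph{manipulator} agent $k$. I single out an event $S$ --- a prescribed agent (possibly $k$ herself) sharing a coalition with a prescribed block of agents --- and track $p=\Pr[S\mid I_1]$ and $q=\Pr[S\mid I_2]$. The three pillars are: (i) in $I_1$, where $k$ over-reports her dislikes, any outcome realizing $S$ has social welfare below the optimum, so acceptability and the assumed bounded ratio force $p$ to be small; (ii) in the truthful instance $I_2$ the manipulator still (weakly) prefers to avoid $S$, so strategyproofness yields $q\le p$; (iii) in $I_2$ the optimum requires $S$, so $\mathbb{E}[SW(\mathcal{M}(I_2))]$ is at most a fixed baseline plus $q$ times the welfare gain contributed by $S$, and the ratio $\mathtt{OPT}(I_2)/\mathbb{E}[SW(\mathcal{M}(I_2))]$ is pushed toward $2$.

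Operationally, I first write the two instances explicitly --- in the spirit of the hub gadget behind Theorems~\ref{LB:Duplex:HG} and~\ref{LB:Duplex:FHG} --- and evaluate $\mathtt{OPT}(I_1)$, $\mathtt{OPT}(I_2)$ and the social welfare of every relevant coalition type as a function of whether $S$ holds. I then turn (i) into an explicit bound $p\le p_0$, propagate it through (ii) to $q\le p_0$, and substitute into (iii). Choosing how many agents sit on the \emph{reward} side (those who like $k$) versus the \emph{penalty} side (those $k$ can disown), and letting the instance grow, I would make the resulting lower bound on the ratio converge to $2$, with $\epsilon$ absorbing the lower-order terms. Throughout I must also check that $k$ has no deviation other than the intended one that does strictly better, so that (ii) is actually valid.

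The main obstacle is structural to the duplex weights. In Theorem~\ref{LB:General_and_randomized:HGandFHG} one makes the welfare of $S$ \emph{slightly} negative by opposing a true penalty $-\epsilon$ to a reward $0.9$ on top of a vanishing baseline $\epsilon$, which drives $p$, and hence the guaranteed welfare, to zero and the ratio to infinity. With weights confined to $\{-1,0,1\}$ a mutual friend/enemy pair cancels exactly, so no single edge can make $S$ strictly loss-making; the negative welfare required in step (i) can be produced only by letting $k$ repudiate \emph{more} agents than genuinely reward her, and forcing those agents to stand or fall together requires binding them into one block whose internal edges inject a positive baseline. This baseline is precisely what dilutes the effect in step (i) and keeps the forced ratio a constant, which is why the randomized bound is $2$ rather than the deterministic $n-2$ of Theorem~\ref{LB:Duplex:HG}. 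The delicate, and decisive, part of the proof is to balance the reward multiplicity, the penalty multiplicity, and this unavoidable baseline so that the forced ratio tends to exactly $2$, matching the duplex FHG bound of Theorem~\ref{LB:Duplex:FHG}.
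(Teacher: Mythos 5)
There is a genuine gap: you have transplanted the orientation of Theorem~\ref{LB:General_and_randomized:HGandFHG}, but the paper's proof of this theorem runs in exactly the opposite direction, and your version breaks at both ends. The paper reuses the instances $I_1,I_2$ of Theorem~\ref{LB:Duplex:HG}: in the manipulated instance $I_1$ the pairing of $n-1$ with $n$ is the \emph{unique} positive-welfare outcome (every other outcome has welfare $\leq 0$, and $\mathtt{OPT}=1$), so a bounded ratio forces its probability $p$ to be \emph{large}, not small; strategyproofness then gives $q \geq p$ (the manipulator $n-1$ truly \emph{likes} $n$ and seeks the event, so if lying raised the pairing probability she would lie), and in the truthful instance $I_2$ the optimum \emph{avoids} the event (it puts $1,\ldots,n-1$ together for welfare $n-2$, while any outcome realizing the pairing has welfare exactly $1$). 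Balancing $\frac{1}{q} = \frac{n-2}{q+(1-q)(n-2)}$ yields $q=\frac{n-2}{2n-5}\to\frac12$ and the bound $2$. Your pillars (i)--(iii) assert the mirror image on every count ($S$ bad in $I_1$, $p$ forced small, $q\le p$, $\mathtt{OPT}(I_2)$ requiring $S$), which is the Theorem~\ref{LB:General_and_randomized:HGandFHG} template that, as you yourself observe, does not survive the restriction to $\{-1,0,1\}$.

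Concretely, two steps of your plan fail. First, step (ii) as stated is vacuous: if the manipulator only \emph{weakly} prefers to avoid $S$ (true value $0$), strategyproofness imposes no relation between $q$ and $p$; you need a strict dislike, i.e., true $-1$ edges toward the block, and this collides with step (iii), which demands that the truthful optimum nevertheless force $k$ into $S$ --- achievable only through binding edges whose positive baseline you must then carry. Second, step (i) cannot deliver a vanishing $p_0$ in the duplex model: welfare in ASHGs with $\{-1,0,1\}$ weights is an integer, so $\mathtt{OPT}(I_1)\geq 1$ whenever it is positive, while a mechanism realizing $S$ can always confine the damage by pairing $k$ with a single cheapest block member and arranging all remaining agents optimally, costing only $O(1)$ welfare; hence bounded approximation forces $p$ below a \emph{constant} bounded away from $0$, not toward $0$. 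With $p_0$ constant, your final bound $\mathtt{OPT}(I_2)/(W_0+p_0\,\mathtt{OPT}(I_2))$ sits strictly below $\mathtt{OPT}(I_2)/W_0$, and the binding baseline needed for (iii) pushes natural instantiations of the reward/penalty balance to ratios around $3/2$ rather than $2$ (e.g., gadgets where rewarders are glued to a disliked agent give $\mathtt{OPT}/W_0 \le 3/2$). Since you never write down the instances, the event $S$, or the parameter balance, and the claimed convergence to $2$ is asserted rather than derived, the proposal is not a proof; the repair is to flip the roles as the paper does --- make the event indispensable in the lying instance and welfare-capping in the truthful one, with the manipulator \emph{seeking} it.
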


\begin{proof}
Let us consider the instance $I_1$ depicted in
Figure~\ref{fig:LB_simple_valuations_a}. Let $p$ be the probability
that a randomized mechanism returns the outcome where agents $n-1$
and $n$ are together in the same coalition and all the other agents
are alone. Notice that in such a case agent $n-1$ has expected
utility equal to $p$. Let us call $rm$ the outcome of the randomized
mechanism. Then the expected social welfare in this case is such
that $\mathbb{E}[rm] \leq p$. Let us now consider the instance $I_2$
depicted in Figure~\ref{fig:LB_simple_valuations_b}. Let $q$ be the
probability that a randomized mechanism returns an outcome where
agents $n-1$ and $n$ are together in the same coalition (possibly
with other agents). Notice that the social welfare of any outcome
where agents $n-1$ and $n$ are together is always $1$, independently
from the coalitions of the other agents are member of. Moreover,
notice that in such a case agent $n-1$ has expected utility equal to
$q$. On the other hand, the mechanism with probability $1-q$ put
agents $n-1$ and $n$ not together in the same coalition. In such a
case, i.e., with probability $1-q$, the social welfare is at most
equal to $n-2$. Let us call $rm'$ the outcome of the randomized
mechanism. It turns out that the expected social welfare in this case is such
that $\mathbb{E}[rm'] \leq q + (1-q)(n-2)$. We notice that such
mechanism is strategyproof only if $q \geq p$. In fact, if $p>q$, then
agent $n-1$ can improve her utility by declaring value
$v_{n-1}(j)=-1$, for any $j=1,\ldots,n-2$, and $v_{n-1}(n)=1$ (thus
reconstructing the instance $I_1$), since in such a case she would
get expected utility $p > q$. Therefore, the expected social welfare
of the mechanism of $I_1$ is maximized when $p=q$. We now equalize
the expected approximation ratio of the mechanisms of both instances
(where we set $p=q$), where $1$ is the optimal value for the instance depicted in Figure~\ref{fig:LB_simple_valuations_a}, and $n-2$ is the optimal value for the instance depicted in Figure~\ref{fig:LB_simple_valuations_b}. 

$\frac {1}{\mathbb{E}[rm]}= \frac {n-2}{\mathbb{E}[rm']} \implies  
\frac {1}{q}=\frac{n-2}{q + (1-q)(n-2)} \implies q= \frac{q + (1-q)(n-2)}{n-2}  \implies  
\frac{(q-1)(n-2)}{q(3-n)} = 1 \implies q=\frac{n-2}{2n-5}.$


It follows that for big value of $n$, $q$ tends to $\frac{1}{2}$,
and thus proving the theorem.
\end{proof}

We now present a deterministic strategyproof acceptable mechanism \ref{mech:duplex_HG_and FHG} with approximation $O(n^2)$ for ASHGs and $O(n)$ for FHGs. We doubt the existence of deterministic strategyproof acceptable mechanisms with approximation ratio $O(n)$ for ASHGs and $O(1)$ for FHGs. We provide some discussion supporting it, at the end of the section. Closing the gap for duplex valuations, is one of the main open problem. 

The following definition is crucial for Mechanism \ref{mech:duplex_HG_and FHG}. 
\begin{definition}
Given $\mathbf{d}=\langle d_1,...,d_n\rangle$ declared by the set of agents $N$, we say that an agent $i \in N$ is a \emph{sink} if there is no agent $j \in N$ such that $d_i(j)=1$ and $d_j(i) \neq -1$. 
\end{definition}

The idea of the mechanism is as follows. It considers the agents in an arbitrary ordering. If the considered agent $i$ has value $1$ for some other agent $j$, such that $j$ also has value $1$ for $i$, or $j$ is a sink, or $j$ is before $i$ in the ordering, then it returns agents $i$ and $j$ together in a coalition, and each other agent in a coalition alone. If, after considering all the agents, the mechanism does not create the coalition with two agents, then returns each agent in a coalition alone.  
It follows the formal description of the mechanism \ref{mech:duplex_HG_and FHG}.

\begin{mechanism}{$\mathcal{M}_2$}\label{mech:duplex_HG_and FHG}
Given any declared valuation $\mathbf{d}=\langle d_1,...,d_n\rangle$, the mechanism performs as follows: \\
1 Consider any ordering of the agents and, for the sake of simplicity, let $i$ be the $i$-th agent in such ordering. \\
2 For $i=1$ to $n$:
\begin{itemize}
    \item[a] If there exists $j \in N$ such that $d_i(j) = 1$ $\wedge$ $d_j(i)=1$: put agents $i$ and $j$ together into a coalition and any other agent alone, and terminate.
    \item[b] If there exists $j \in N$ such that $d_i(j) = 1$ $\wedge$ $d_j(i)=0$ $\wedge$ $j$ is a sink: put agents $i$ and $j$ together into a coalition and any other agent alone, and terminate.
    \item[c] If there exists $j \in N$ such that $d_i(j) = 1$ $\wedge$ $d_j(i)=0$ $\wedge$ $j<i$: put agents $i$ and $j$ together into a coalition and any other agent alone, and terminate.
\end{itemize}
3 If no coalition of two agents has been created during the step 2: return each agent in a coalition on its own. 
\end{mechanism}


\begin{theorem}\label{UB:duplex:HG}
For ASHGs and FHGs with duplex valuations, Mechanism
\ref{mech:duplex_HG_and FHG} is a deterministic strategyproof
acceptable mechanism. The approximation ratio is $O(n^2)$ for ASHGs with duplex valuations, and 
$O(n)$ for FHGs with duplex valuations.
\end{theorem}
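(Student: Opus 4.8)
The plan is to establish the three assertions---acceptability, strategyproofness, and the two approximation ratios---separately, treating strategyproofness as the heart of the argument. The first observation is structural: $\mathcal{M}_2$ outputs either all singletons (social welfare $0$) or exactly one two-element coalition $\{i,j\}$ with every other agent isolated, so the whole realized welfare equals $d_i(j)+d_j(i)$ (or half of it for FHGs). Since rule (a) fires only when $d_i(j)=d_j(i)=1$ and rules (b),(c) fire only when $d_i(j)=1$ and $d_j(i)=0$, the formed pair contributes $2$ or $1$ respectively. Hence the output welfare is always non-negative and $\mathcal{M}_2$ is acceptable for both ASHGs and FHGs.

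For strategyproofness I would fix an agent $k$ with true valuation $v_k$ and an arbitrary $\mathbf{d}_{-k}$, and exploit that $k$'s utility depends only on its partner: it is $1$ if matched to a friend ($v_k=1$), $0$ if isolated or matched to a neutral agent, and $-1$ if matched to an enemy. The first step is to show truthful reporting never matches $k$ to an enemy: whenever $k$ is the processed agent the triggering rule requires $d_k(j)=1$, hence $v_k(j)=1$; and whenever $k$ is the partner, rule (a) forces $v_k(\cdot)=1$ while rules (b),(c) force $v_k(\cdot)=0$. Thus truthfully $u_k\ge 0$, and the only conceivable profitable deviation turns a $0$ into a $1$, i.e. secures $k$ a friend $m$ with $v_k(m)=1$. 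I would then assume some report $d_k$ matches $k$ to such an $m$ and case-split on which rule fires and whether $k$ is the processed agent or the partner, aiming to show the truthful profile already matches $k$ to a friend. Because $v_k(m)=1$ forces $d_k(m)=1$ under truthfulness, in every "mutual" or "$k$ is processor" case the pair $\{k,m\}$ survives as a candidate truthfully, and one argues the scan reaches and realizes it (or realizes an even earlier friend of $k$).

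The delicate point---and the step I expect to be the main obstacle---is the coupling introduced by the sink condition in rule (b). An agent's declared value about a third party $m'$ enters the sink status of $m'$ through the clause ``$d_{m'}(k)=1$ and $d_k(m')\neq -1$'', so by changing $d_k(m')$ agent $k$ can turn $m'$ from a sink into a non-sink, thereby enabling or disabling an \emph{earlier} pairing and shifting where, or whether, the scan terminates before reaching $k$. The crux is therefore to rule out deviations that act indirectly, by rerouting the first pair that forms rather than by directly proposing to $m$. I would handle this by comparing the truthful and deviating runs position by position in the ordering, showing that any earlier pairing which truthfulness enables but the deviation suppresses must itself either already place $k$ with a friend or leave the $\{k,m\}$ candidate intact for a later step; this is the part of the argument that demands the most care.

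For the approximation bounds I would separate an upper bound on $\mathtt{OPT}$ from a lower bound on the mechanism's value. For ASHGs, $\mathtt{OPT}$ is at most the number of ordered pairs with $v_i(j)=1$, hence at most $n(n-1)$; for FHGs, bounding each coalition $C$ by $\frac{P_C}{|C|}\le |C|-1$, where $P_C$ counts its $+1$ edges, and summing over coalitions gives $\mathtt{OPT}\le n-1$. For the lower bound I would prove a structural lemma: calling $i\to j$ \emph{useful} when $v_i(j)=1$ and $v_j(i)\neq -1$, whenever $\mathtt{OPT}>0$ a useful edge exists (a positive-welfare coalition has strictly more $+1$ edges than $-1$ edges, so some $+1$ edge cannot have a $-1$ reverse), and whenever a useful edge exists $\mathcal{M}_2$ forms some pair. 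The latter follows by tracing useful edges: a mutual edge triggers (a), a backward useful edge triggers (c), and a useful edge into a sink triggers (b); following forward non-mutual useful edges strictly increases the index, so the chain must terminate in a trigger. Hence the mechanism attains social welfare at least $1$ for ASHGs and at least $\frac{1}{2}$ for FHGs, yielding ratios $O(n^2)$ and $O(n)$ respectively.
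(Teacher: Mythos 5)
Your acceptability argument and your approximation bounds are correct and essentially the paper's own: the paper likewise observes that the output is either all singletons or a single pair of strictly positive welfare, proves as a standalone lemma that the existence of a non-sink agent forces a pair to form (by the same index-increasing chase along edges $i\to j$ with $d_i(j)=1$, $d_j(i)\neq -1$ that you perform with ``useful'' edges), and obtains the ratios from $\mathtt{OPT}\leq n(n-1)$ for ASHGs and $\mathtt{OPT}\leq n$ for FHGs against a mechanism value of at least $1$, resp.\ $\frac{1}{2}$. Your injection from $+1$ arcs to $-1$ arcs, showing that $\mathtt{OPT}>0$ forces a useful arc, is a correct explicit justification of a step the paper leaves implicit.

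The genuine gap is in strategyproofness, and you never close it: you reduce everything to showing that a deviation which reroutes the scan by toggling a third party's sink status cannot lift agent $k$'s utility from $0$ to positive, and then you only announce a run-by-run comparison ``demanding the most care'' without carrying it out. That step cannot be completed in the form you propose, as the following instance shows. Take $n=3$ with ordering $1,2,3$ and true values $v_1(2)=1$, $v_2(3)=1$, $v_3(1)=1$, $v_3(2)=-1$, all other values $0$. Under truthful reports, at step $i=1$ rule (b) fires with $j=2$ (agent $2$ is a sink, since its only $+1$ arc goes to $3$ and $d_3(2)=-1$), the pair $\{1,2\}$ forms, and agent $3$ gets utility $0$. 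If agent $3$ instead declares $d_3(2)=0$ while keeping $d_3(1)=1$, then agent $2$ is no longer a sink, no rule fires at steps $1$ or $2$, and at step $3$ rule (c) fires with $j=1$, forming $\{1,3\}$ and giving agent $3$ utility $1$ (resp.\ $\frac{1}{2}$ in FHGs) --- a strictly profitable lie of exactly the $-1\to 0$ third-party type you flagged. In particular your hoped-for invariant (any earlier pairing suppressed by the deviation either already matches $k$ to a friend or leaves the pair $\{k,m\}$ realizable truthfully) is false: the truthful run terminates at step $1$ with $k=3$ unmatched. It is worth noting that the paper's own case analysis is also silent here --- in the case where the pair $C_{j,z}$ forms before $i$ is considered, it asserts that ``the only thing that $i$ could do is (mis)-declaring $d_i(j)=1$'' and never examines a deviation on $d_i(z)$ that destroys $z$'s sink status --- so you correctly located the soft spot of the argument; but identifying the obstruction is not the same as overcoming it, and as written your proposal does not establish the strategyproofness claim of the theorem.
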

\begin{proof}

The mechanism \ref{mech:duplex_HG_and FHG} returns at most one coalition composed by two agents and all the other coalitions are composed by one agent alone. Moreover, no agent $i$ is put together with another agent $j$ in the same coalition if there is a value of $-1$ between them, that is if $d_i(j)=-1$ or $d_j(i)=-1$. This implies that no agent gets negative utility in the outcome returned by \ref{mech:duplex_HG_and FHG}, i.e., \ref{mech:duplex_HG_and FHG} is acceptable. More specifically, if a coalition of two agents is created, then such a coalition has positive (i.e., strictly greater than zero) social welfare. In particular, in ASHGs every agent gets utility $1$ or zero, while in FHGs $\frac{1}{2}$ or zero.   
Furthermore notice that, given the valuations declared by agents, if all the agents are sinks, then the optimal solution has social welfare zero and also \ref{mech:duplex_HG_and FHG} returns the outcome where each agent is in a coalition alone. On the other hand, if there is at least one agent that is not a sink, then it is not difficult to see that the optimal solution has positive social welfare. We now prove that, in such a case \ref{mech:duplex_HG_and FHG} would return a coalition with two agents together with positive social welfare. 

\begin{lemma}\label{lemma:if_opt_is_positive_then_also_mech_is_positive}
Given the valuations $\mathbf{d}=\langle d_1,...,d_n\rangle$ declared by agents, if there exists an agent $i$ that is not a sink, then Mechanism \ref{mech:duplex_HG_and FHG} returns an outcome where two agents are put together in the same coalition, thus yielding positive social welfare.   
\end{lemma}

\begin{proof}
First suppose that Mechanism \ref{mech:duplex_HG_and FHG} does not consider agent $i$ (line 2). It means that \ref{mech:duplex_HG_and FHG} has created a coalition with two agents before considering $i$ (with at least one agent of the coalition appearing before $i$ in the ordering). Suppose now that agent $i$ is considered by \ref{mech:duplex_HG_and FHG}. Then two scenarios are possible: i) agent $i$ is put together with another agent, still getting a positive social welfare, or ii) $i$ is put alone. This means that, for any agent $j$ such that $d_i(j)=1$ and $d_j(i) \neq -1$, agent $j$ is not a sink an she appears after $i$ in the ordering. Thus we can now consider the agent $j$ as the new one that is not a sink and apply the same argument as above.  Summarizing we have that at any step $s$ of the mechanism, if a coalition of two agents is not created, then there exists an agent that is not a sink and that is not considered at step $s$ yet. Therefore a coalition of two agents will be for sure created by \ref{mech:duplex_HG_and FHG} at some step after $s$.      
\end{proof}

We are now ready to show that Mechanism \ref{mech:duplex_HG_and FHG} is strategyproof. The following argument is valid for both ASHGs and FHGs. The proof relies on the analysis of different cases. 

Assume an agent $i$ gets positive (i.e., greater than zero) utility when she declares her valuations truthfully. Then, agent $i$ cannot improve her utility by declaring valuations $d_i \neq v_i$. In fact getting positive utility, that is utility $1$ or $\frac{1}{2}$ depending on whether we consider ASHGs or FHGs respectively, is the best she can obtain.

Assume now that an agent $i$ gets utility zero when she declares her valuations truthfully. We show that agent $i$ cannot improve her utility by declaring valuations $d_i \neq v_i$. If the agent $i$ is a sink then she has no incentive to lie. In fact, in this case $i$ would get positive utility only if she is put together an agent $j$ such that $d_i(j)=1$ and $d_j(i)=-1$. However the outcome returned by Mechanism \ref{mech:duplex_HG_and FHG} is such that no agent gets negative utility. Moreover $i$ has no incentive to declare a value of $1$ for some agent (in order to become not a sink anymore) if the real value is indeed different than $1$. It remains to consider the case where the agent $i$ is not a sink. By Lemma \ref{lemma:if_opt_is_positive_then_also_mech_is_positive} we know that in this case our mechanism always returns a coalition of two agents. Let us first suppose that such coalition, that we call $C_{j,z}$, is formed by agents $j$ and $z$ together. If $i$ has not been considered by the mechanism, that is, for instance the coalition $C_{j,z}$ has been created while considering agent $j$ that appears before $i$ in the ordering, then there is nothing that agent $i$ can do in order to get positive utility. Indeed the only thing that $i$ could do is (mis)-declaring $d_i(j)=1$ (if we suppose that $v_i(j) \neq 1$). In such a case, if also $d_j(i)=1$, the mechanism could return the coalition with $i$ and $j$ together. However agent $i$ would still not get positive utility. If $i$ has been considered by the mechanism but has not been put in a coalition together with another agent, then it means that while  \ref{mech:duplex_HG_and FHG} was considering agent $i$, for any $j$ such that $d_i(j)=1$, $j$ is not a sink and $j$ was not considered by the mechanism yet. We notice that $j$ has no incentive to declare a value of $1$ for some agents $z$ if the real value is not $1$ (i.e., $v_j(z) \neq 1$). Still there is nothing that $i$ can do. 

Let us finally suppose that the coalition of two agents returned by \ref{mech:duplex_HG_and FHG} contains agent $i$ (but still $i$ gets utility zero). This is only possible if, while mechanism \ref{mech:duplex_HG_and FHG} was considering agent $i$, it was not able to put $i$ together with another agent and (for the same reasons as in the previous case), there is nothing that agent $i$ can do to change it. In fact, agent $i$ could be put together another agent $j$, that appears after $i$ in the ordering, when the mechanism considers $j$. In this case the mechanism could put $i$ together with $j$ only if $d_j(i) = 1$. However it must be that $d_i(j) \neq 1$, otherwise the mechanism would have put $i$ and $j$ together while considering $i$, and therefore agent $i$ still does not get positive utility.

We now show the approximation ratio of the mechanism. If the optimal solution has social welfare zero, then also our mechanism returns an outcome (i.e., all the agents alone) with social welfare zero. If the optimal solution has positive social welfare (and thus there exists an agent that is not a sink), then by Lemma \ref{lemma:if_opt_is_positive_then_also_mech_is_positive}, we know that our mechanism returns an outcome with social welfare at least $1$ for ASHGs, and at least $\frac{1}{2}$ for FHGs. The theorem follows by noticing that, any agent can get utility at most $n-1$ for ASHGs and at most $1$ for FHGs. 
\end{proof} 


We point out that, if we consider ASHGs, there exists an instance and an ordering of the agents for that instance, such that the optimal solution has value order of $n^2$, while \ref{mech:duplex_HG_and FHG} puts two agents in a coalition in the last iteration of the loop {\em For}. Thus, even if \ref{mech:duplex_HG_and FHG} does not terminate after putting two agents in a coalition, the analysis cannot be improved. Clearly, \ref{mech:duplex_HG_and FHG} could perform more loops {\em For} in order to match more than one pair of agents. However, in such a case we can show that the mechanism is not strategyproof anymore. In fact, consider a cycle of 4 nodes with arcs $\{(1,2),(2,3),(3,4),(4,1)\}$, and all the weights $1$. The ordering is $1,2,3,4$. If the mechanism iterates the loop {\em For}, it would return in the first iteration agents $\{4,1\}$ in a coalition, and then, in a second iteration of the {\em For}, agents $\{2,3\}$ together. Notice that agent $1$ has utility zero. However, agent $1$ can improve her utility by declaring a further arc of weight $-1$ to agent $4$. In fact, in this case, in the first iteration the mechanism would put agents $\{3,4\}$ together, and then, in the second one, agents $\{1,2\}$.


\section{Simple valuations}\label{sec:Simple:valuations_results}

Exactly as in the case of non-negative valuations, for ASHGs with simple valuations, Mechanism $\mathcal{M}_1$
is acceptable and strategyproof and it also achieves the optimal social welfare. Therefore, we focus
on FHGs. We first prove that any deterministic
strategyproof mechanism cannot approximate better than
$\frac{6}{5}$ the social welfare. 

\begin{theorem}\label{LB:NonNegative_and_Simple:FHG}
For FHGs with simple valuations, no
deterministic strategyproof acceptable mechanism has approximation
ratio less than $\frac{6}{5}$.
\end{theorem}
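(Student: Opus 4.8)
The plan is to reuse the two-instance strategyproofness template that drives Theorems~\ref{LB:NonNegative:FHG}--\ref{LB:Duplex_and_randomized:HG}: I will exhibit two instances $I_1$ and $I_2$ with simple valuations that are identical except for the declaration of a single distinguished agent $t$, use the assumed approximation guarantee to pin down the mechanism's output on $I_1$, and then invoke strategyproofness to transfer a lower bound on $t$'s utility to $I_2$, where it will force a coalition structure of welfare a factor $6/5$ below the optimum. The computation I will lean on throughout is the fractional one: for simple valuations a coalition $C$ contributes $SW(C)=\frac{1}{|C|}\cdot|\{(i,j): i,j\in C,\ v_i(j)=1\}|$, and an agent $i$ in $C$ gets $u_i=\frac{1}{|C|}\cdot|\{j\in C: v_i(j)=1\}|$. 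The key tension to exploit is that enlarging a coalition \emph{dilutes} these per-capita quantities, so social welfare favours dense merged coalitions while an individual agent typically prefers a small tight coalition containing only its friends.

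Concretely, first I would fix the declarations of all agents other than $t$ so that the graph restricted to $N\setminus\{t\}$ has an optimal partition that is a genuine \emph{split} rather than the grand coalition; this already requires at least four agents and asymmetric $\{0,1\}$-valuations, since with three symmetric agents the grand coalition is optimal and no tension survives. Then I would pick $t$'s declaration in $I_1$ so that the \emph{unique} structure achieving social welfare within a factor $6/5$ of $\mathtt{OPT}(I_1)$ places $t$ together with (some of) its true friends; by the hypothesis $r^{\mathcal M}<\tfrac65$ this pins $\mathcal M(I_1)$. Evaluating this output under $t$'s $I_2$-valuation yields a number $\mu=u_t(\mathcal M(I_1))$, and strategyproofness (taking $t$'s true type to be its $I_2$-declaration and the deviation to be its $I_1$-declaration) gives $u_t(\mathcal M(I_2))\ge\mu$. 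Finally I would design $I_2$ so that its optimum is a split whose welfare exceeds by a factor $6/5$ the welfare of \emph{every} coalition structure in which $t$ still receives utility at least $\mu$; since $\mathcal M(I_2)$ must meet $u_t\ge\mu$, its welfare is at most $\tfrac56\,\mathtt{OPT}(I_2)$, and the bound follows by the arbitrariness of $\mathcal M$.

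The hard part will be the calibration, and it is genuinely delicate for fractional games. On the one hand the approximation bound on $I_1$ must pin $\mathcal M(I_1)$ \emph{uniquely}, so that no tie-breaking rule lets the mechanism escape into an alternative structure giving $t$ low utility; this forces the merged coalition to be strictly better than every split, yet by no more than a $6/5$ factor. On the other hand the forced utility $\mu$ must be large enough that, on $I_2$, it rules out the optimal split together with all structures of welfare above $\tfrac56\,\mathtt{OPT}(I_2)$, while remaining achievable so that acceptability and the welfare count stay consistent. Because adding a friend to $t$ tends to make its inclusion welfare-improving (collapsing the gap) while removing one kills the manipulation leverage, the two instances must sit exactly at the threshold where including $t$ flips from beneficial to harmful; hitting that threshold so that both ratios equal $6/5$, rather than the weaker constants one obtains from naive clique-plus-pendant gadgets, is where the construction must be carried out with care.
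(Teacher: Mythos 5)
Your high-level template is indeed the one the paper uses for its other lower bounds, but as submitted your attempt has a genuine gap: the entire mathematical content of this theorem is the explicit calibrated construction, and you never produce it --- you yourself flag the calibration as the undone ``hard part.'' The paper's proof is precisely such a calibration. Instance $I_1$ is the directed $7$-cycle with unit arcs (Figure~\ref{fig:LB_Simple_Non-negative_valuations_a}); its optimum is $\frac{5}{3}$ (two coalitions of two consecutive agents plus one of three, e.g.\ $\{1,2\},\{3,4\},\{5,6,7\}$) while the grand coalition has welfare $1$, so any mechanism with ratio below $\frac{5}{3}$ --- in particular below $\frac{6}{5}$ --- must output more than one coalition and hence leave some agent $k$ with utility zero; by the cyclic symmetry of $I_1$ one may take $k=2$. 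Instance $I_2$ adds the single chord $(2,4)$ (Figure~\ref{fig:LB_Simple_Non-negative_valuations_b}); its optimum is $2$ (via $\{2,3,4\},\{5,6\},\{1,7\}$), while every partition separating $2$ from $3$ has welfare at most $\frac{5}{3}$. Hence a mechanism with ratio below $\frac{2}{5/3}=\frac{6}{5}$ on $I_2$ must put $2$ and $3$ together, so agent $2$, whose true type is her $I_1$ declaration and who truthfully received utility zero, gains strictly positive utility by the lie of adding $(2,4)$ --- contradicting strategyproofness. Without instances realizing these exact numbers ($\frac{5}{3}$, $1$, $2$) your argument establishes nothing, since the threshold $\frac{6}{5}$ is nothing but their ratio.

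Two structural choices in your plan would also cause trouble if executed literally. First, you require the approximation guarantee to pin $\mathcal{M}(I_1)$ \emph{uniquely}; in the paper's construction this is false --- the $7$-cycle has many distinct welfare-$\frac{5}{3}$ partitions --- and it is also unnecessary: all one needs is that \emph{every} sufficiently good outcome leaves some agent with utility zero, after which symmetry lets you name that agent. Second, your manipulation runs in the opposite direction from the paper's: you take $t$'s true type to be the $I_2$ declaration and the deviation to be the $I_1$ declaration, so you must lower-bound $t$'s $I_2$-utility $\mu$ uniformly over all admissible outputs on $I_1$ --- exactly the control you cannot get without the uniqueness you demanded. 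The paper reverses the direction: the truthful type is the $I_1$ declaration, under which the agent provably gets zero, and the lie \emph{adds} an edge, so that any strictly positive utility after the lie is already a violation. This reversal is what makes the weak conclusion ``some agent has utility zero on $I_1$'' sufficient, and it is the step your calibration scheme would have had to rediscover.
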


\begin{proof} Let us consider the instance $I_1$ depicted in
Figure~\ref{fig:LB_Simple_Non-negative_valuations_a}. The reader can
easily check (by considering all the possible coalitions) that an
optimal solution has social welfare $\frac{5}{3}$. It is composed
by the three coalitions where, two of them contain two consecutive
agents, and the remaining one contains three consecutive agents. For
instance, an optimal solution could be $C_1=\{1,2\}, C_2=\{3,4\},
C_3=\{5,6,7\}$. Notice that the grand coalition has social welfare
$1$. Therefore, a mechanism achieving an approximation better than
$\frac{5}{3}$, has to return more than one coalition. In such a
solution there always exists at least one agent, say agent $k$,
having utility zero. Let us now consider the instance $I_2$ depicted
in Figure~\ref{fig:LB_Simple_Non-negative_valuations_b}, where without loss of generality
we suppose that $k=2$. Again the reader can easily check (by
considering all the possible coalitions) that an optimal solution
has social welfare $2$. Such optimal solution is $C_1=\{2,3,4\},
C_2=\{5,6\}, C_3=\{1,7\}$. Once again the reader can check that any
solution where agents $2$ and $3$ are not in the same coalition,
(i.e., any solution where agent $2$ has utility equal to $0$ in the
instance $I_1$ ) can achieve a social welfare of at most
$\frac{5}{3}$, and therefore an approximation not better than
$\frac{6}{5}$. We conclude that any mechanism achieving an
approximation ratio strictly better than $\frac{6}{5}$, in both
instances $I_1$ and $I_2$, is not strategyproof.

\begin{figure}[h]%
    \centering
    \begin{subfigure}[b]{0.40\textwidth}
        \begin{tikzpicture}[scale=0.08,->,>=stealth',shorten >=1pt,auto,node distance=1.35cm,
                    thick,main node/.style={circle,draw,font=\sffamily\Large\bfseries}]
        \node[main node] (1) {1};
        \node[main node] (2) [below right of=1] {2};
        \node[main node] (3) [below of=2] {3};
        \node[main node] (4) [below left of=3] {4};
        \node[main node] (5) [above left of=4] {5};
        \node[main node] (6) [above left of=5] {6};
        \node[main node] (7) [above right of=6] {7};
        \path[every node/.style={font=\sffamily\small}]
        (1) edge node {} (2)
        (2) edge node {} (3)
        (3) edge node {} (4)
        (4) edge node {} (5)
        (5) edge node {} (6)
        (6) edge node {} (7)
        (7) edge node {} (1);
        \end{tikzpicture}
        \caption{Instance $I_1$}
        \label{fig:LB_Simple_Non-negative_valuations_a}
    \end{subfigure}
    \begin{subfigure}[b]{0.40\textwidth}
        \begin{tikzpicture}[scale=0.08,->,>=stealth',shorten >=1pt,auto,node distance=1.35cm,
                    thick,main node/.style={circle,draw,font=\sffamily\Large\bfseries}]
        \node[main node] (1) {1};
        \node[main node] (2) [below right of=1] {2};
        \node[main node] (3) [below of=2] {3};
        \node[main node] (4) [below left of=3] {4};
        \node[main node] (5) [above left of=4] {5};
        \node[main node] (6) [above left of=5] {6};
        \node[main node] (7) [above right of=6] {7};
        \path[every node/.style={font=\sffamily\small}]
        (1) edge node {} (2)
        (2) edge node {} (3)
        (3) edge node {} (4)
        (4) edge node {} (5)
        (5) edge node {} (6)
        (6) edge node {} (7)
        (7) edge node {} (1)
        (2) edge node {} (4);
        \end{tikzpicture}
        \caption{instance $I_2$}
        \label{fig:LB_Simple_Non-negative_valuations_b}
    \end{subfigure}

    \caption{The lower bound instance for simple valuations.}
    \label{fig:LB_Simple_Non-negative_valuations}
\end{figure}
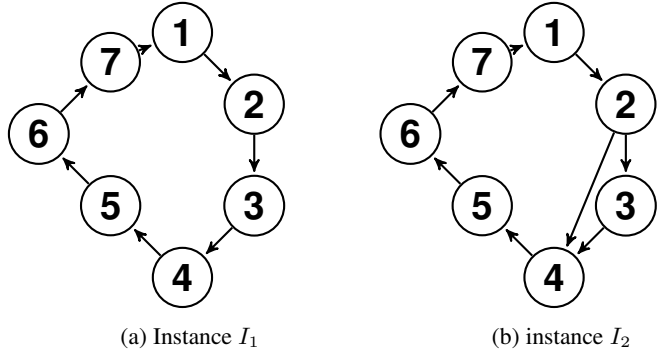
\end{proof}

We now show a deterministic strategyproof acceptable mechanism with
nearly optimal social welfare. Given the preferences declared by the agents $\mathbf{d}=d_1,\ldots,d_n$, and the associated directed
graph representation $G=(V,E)$ (notice that since we are considering
simple valuations, $d_i$ represents (indeed is) the set of arcs
outgoing from node $i$ in $G$), we construct an undirected weighted graph
$\bar{G}=(\bar{V},\bar{E})$, where $\bar{V}=V$. There is an
(undirected) edge $\{i,j\} \in \bar{E}$, if $(i,j) \in E$ or $(j,i)
\in E$. Finally, for each $\{i,j\} \in \bar{E}$, we have that
the weight $w(i,j)=1$ if either $(i,j) \in E$ or $(j,i) \in E$, and $w(i,j)=2$
if both $(i,j) \in E$ and $(j,i) \in E$ (otherwise $w(i,j)=0$, i.e.,
$\{i,j\} \notin \bar{E}$). A matching $m$ of $\bar{G}$ naturally
induces an outcome for fractional hedonic games, that is, any edge
$\{i,j\} \in m$ induces the coalition $C_{i,j}=\{i,j\}$, and for any
node $i$ not matched in $m$ we have the coalition $C_i=\{i\}$.
Notice that the coalitions induced by the matching are such that
each agent can have utility either $\frac{1}{2}$ or $0$. It is possible to show
that, finding the maximum weighted matching of
$\bar{G}=(\bar{N},\bar{E})$, using a consistent tie-breaking rule,
 gives a strategyproof mechanism.
 
The proof of the following lemma is
similar to the one proposed in \cite{DG2010}, which also shows that $\prec$-minimal matching can be found in poly-time.

\begin{lemma}\label{lemma:MaxMatching_strategyproof}
Fix a total order $\prec$ on matchings in the complete graph induced
by all the agents. 
 Let $\mathcal{M}$ be the mechanism
that, given the input $\mathbf{d}=\langle d_1,\ldots,d_n \rangle$, finds the
$\prec$-minimal matching $m$ on $\bar{G}=(\bar{V},\bar{E})$, such that $\sum_{\{i,j\} \in m} w(i,j)$ is maximized.
Then $\mathcal{M}$ is strategyproof.
\end{lemma}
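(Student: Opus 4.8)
The plan is to reduce the strategyproofness of $\mathcal{M}$ to a statement about when agent $i$ can be matched to a partner she genuinely likes. First I would record the two structural facts that make this setting tractable: the weight of any matching $m$ equals twice the social welfare of the induced outcome, so $\mathcal{M}$ indeed maximizes social welfare; and, since valuations are simple, agent $i$'s utility is dichotomous, equal to $\frac{1}{2}$ if she is matched to some $j$ with $v_i(j)=1$ and $0$ otherwise. Consequently, if truthful reporting already yields utility $\frac{1}{2}$ there is nothing to prove, and it suffices to show: if reporting $v_i$ leaves $i$ unmatched or matched to a $j$ with $v_i(j)=0$, then no declaration $d_i$ can match $i$ to an acceptable partner. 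Let $m$ be the matching $\mathcal{M}$ outputs on $(\mathbf{d}_{-i},v_i)$ and $m'$ the matching it outputs on $(\mathbf{d}_{-i},d_i)$, and suppose for contradiction that $m'$ matches $i$ to some $j$ with $v_i(j)=1$.

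The core of the argument is an alternating-path exchange. I would write $w^v$ and $w^d$ for the edge-weight functions of $\bar{G}$ under the two declarations; they are identical except on edges incident to $i$, where they differ only in $i$'s own $\{0,1\}$-contribution. Consider the symmetric difference $m \triangle m'$ and let $P$ be the component containing $i$. Flipping $P$ turns $m$ into a matching in which $i$ is matched to $j$, and I would compute the induced weight change $\Delta^P$ under each weight function. Maximality of $m$ under $w^v$ gives $\Delta^P_{w^v}\le 0$, maximality of $m'$ under $w^d$ gives $\Delta^P_{w^d}\ge 0$, and since $v_i(j)=1$ is the largest contribution $i$ can make to the edge $\{i,j\}$ (while any $m$-edge at $i$ goes to a disliked partner), the difference $\Delta^P_{w^d}-\Delta^P_{w^v}$ is at most $0$. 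These three inequalities force $\Delta^P_{w^v}=\Delta^P_{w^d}=0$, so flipping $P$ preserves maximality under both weight functions.

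The main obstacle — and the place where the consistent tie-breaking order $\prec$ is essential — is that the mere existence of a maximum-weight matching with $i$ matched to $j$ does not by itself contradict $\mathcal{M}$ outputting $m$; I must rule out that $\prec$ discards it. To this end I would decompose $m \triangle m'$ into $P$ together with the remaining components $R$, none of which is incident to $i$, so that $w^v$ and $w^d$ agree on $R$ and the corresponding weight change $\Delta^R$ is common to both. Maximality of $m$ under $w^v$ yields $\Delta^R\le 0$, and maximality of $m'$ under $w^d$ yields $\Delta^R\ge 0$, whence $\Delta^R=0$. Since weights are additive over the independent components $P$ and $R$, the four matchings obtained by flipping each subset of $\{P,R\}$ — namely $m$, $m\triangle P$, $m\triangle R$, and $m'$ — are all simultaneously maximum under $w^v$ and under $w^d$. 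Because $\mathcal{M}$ selects the $\prec$-minimal maximum-weight matching and the fixed order $\prec$ is used in both instances, $\prec$-minimality of $m$ among the $w^v$-maxima gives $m \preceq m'$, while $\prec$-minimality of $m'$ among the $w^d$-maxima gives $m' \preceq m$; hence $m=m'$. This contradicts $i$ being unmatched or matched to a disliked partner in $m$ yet matched to $j$ in $m'$, completing the proof. I expect the bookkeeping in the exchange computation (handling $P$ as a path versus a cycle, and the sub-case where $i$ is matched in $m$ to a disliked partner) to be the only routine-but-delicate part; the genuinely essential idea is the two-sided maximality that pins down $m=m'$ through the single shared order $\prec$.
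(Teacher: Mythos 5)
Your proof is correct and follows the same overall strategy as the paper's: assume a profitable deviation, show that the truthful output $m$ and the manipulated output $m'$ are both maximum-weight matchings in \emph{both} instances, and conclude via the common tie-breaking order $\prec$ that $m = m'$, contradicting agent $i$'s utility gap of $0$ versus $\frac{1}{2}$. The only difference is one of rigor at the key step: the paper asserts the joint optimality of $m$ and $m'$ in essentially one line (leaning on the similar argument in Dughmi--Ghosh), whereas you justify it explicitly through the two-sided exchange computation over the components $P$ and $R$ of $m \triangle m'$, using the dichotomous valuations to show $\Delta^P_{w^d} \le \Delta^P_{w^v}$ and hence that all weight changes vanish --- a more self-contained account of exactly the claim the paper takes for granted.
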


\begin{proof}
Assume for a contradiction that $\mathcal{M}$ is not truthful. Then
there exists $\bar{E}$ induced by edges $E_{-i} \cup E_i$, and $E'_i$
(inducing the edges set $\bar{E'}=\bar{E}_{-i} \cup \bar{E'}_i$),
violating the truthfulness. Let $m=\mathcal{M}(\bar{E})$ and
$m'=\mathcal{M}(\bar{E'})$. Agent $i$ has utility zero in the
coalitions induced by $m$, that is, for any $\{i,j\} \in m$ we have
that $(i,j) \notin E_i$. Yet agent $i$ has utility $\frac{1}{2}$ in
the coalitions induced by $m'$. It means that there exists $\{i,j\}
\in m'$ such that $(i,j) \in E_i$ (and then clearly $\{i,j\} \in
\bar{E}$). Moreover since the mechanism only uses declared edges and
agent $i$ has utility $\frac{1}{2}$ in the coalitions induced by
$m'$, it follows that there exists $\{i,j\} \in m'$ such that $(i,j)
\in E'_i \cap E_i$. It implies that both $m$ and $m'$ are in
$M(\bar{E}) \cap M(\bar{E'})$. Since the mechanism returns the
maximum matching it follows that $m$ and $m'$ are optimal in both
$M(\bar{E})$ and $M(\bar{E'})$. Recalling that $\mathcal{M}$ breaks
ties consistently, this yields a contradiction, as needed.
\end{proof}


Now we prove the approximation ratio of the mechanism. Given an undirected
graph $G=(V,E)$, where $w$ is the edges weight function,  we denote
by $w(E)$ the sum of the weights of the edges belonging to $E$,
i.e., $w(E)=\sum_{\{i,j\} \in E} w(i,j)$.

\begin{theorem}\label{thm:upper bound max matching}
The deterministic mechanism outputting the maximum matching as
described in Lemma~\ref{lemma:MaxMatching_strategyproof} is
strategyproof and acceptable with approximation ratio of $2$.
\end{theorem}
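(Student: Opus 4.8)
The plan is to combine strategyproofness (already supplied by the preceding lemma) with a counting bound that relates the optimum of the fractional game to the weight of a maximum matching. Strategyproofness is immediate from Lemma~\ref{lemma:MaxMatching_strategyproof}. For acceptability, I would note that every coalition produced by $\mathcal{M}$ is either a singleton, of social welfare $0$, or a pair $\{i,j\}$ with $\{i,j\}\in\bar{E}$, whose social welfare is $\frac{v_i(j)+v_j(i)}{2}=\frac{w(i,j)}{2}\ge 0$; hence $SW(\mathcal{M}(\mathbf{d}))\ge 0$. The same computation yields the exact identity $SW(\mathcal{M}(\mathbf{d}))=\frac{w(m)}{2}$, where $m$ is the maximum-weight matching returned by the mechanism. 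So the whole theorem reduces to establishing $\mathtt{OPT}(\mathbf{d})\le w(m)$.

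To bound the optimum I would write, for each coalition $C$ of an optimal partition, its social welfare as $SW(C)=\frac{\sum_{i,j\in C}v_i(j)}{|C|}=\frac{w(E(C))}{|C|}$, where $E(C)$ denotes the set of edges of $\bar{G}$ with both endpoints in $C$. The crux is the per-coalition inequality
\[
\frac{w(E(C))}{|C|}\ \le\ \mu(C),
\]
where $\mu(C)$ is the weight of a maximum-weight matching using only edges of $\bar{G}$ inside $C$. Granting this, for each optimal coalition $C$ I would fix such a maximum matching; since the coalitions are vertex-disjoint, the union of these matchings is a single matching $m'$ of $\bar{G}$ with $w(m')=\sum_{C}\mu(C)\ge\sum_{C}SW(C)=\mathtt{OPT}(\mathbf{d})$. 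Because $m$ is a maximum-weight matching of $\bar{G}$, we get $w(m)\ge w(m')\ge\mathtt{OPT}(\mathbf{d})$, and combining with $SW(\mathcal{M})=\frac{w(m)}{2}$ gives $\mathtt{OPT}(\mathbf{d})\le 2\,SW(\mathcal{M}(\mathbf{d}))$, i.e.\ approximation ratio at most $2$.

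The main obstacle is the per-coalition inequality, equivalently $w(E(C))\le|C|\cdot\mu(C)$, and I would prove it by edge colouring. The edges of the simple graph on the $|C|$ vertices of $C$ can be partitioned into at most $|C|$ matchings; concretely, one may restrict to $C$ a round-robin edge colouring of the complete graph $K_{|C|}$, which uses $|C|-1$ colours when $|C|$ is even and $|C|$ colours when $|C|$ is odd. Each colour class is a matching of $\bar{G}$ inside $C$, hence of weight at most $\mu(C)$, and summing over the at most $|C|$ classes gives $w(E(C))\le|C|\cdot\mu(C)$. Finally I would sanity-check tightness on the extremal instance — an odd clique of mutual friends, all edges of weight $2$ — where every inequality above holds with equality, $\mathtt{OPT}=|C|-1$ and $SW(\mathcal{M})=\frac{|C|-1}{2}$, confirming both that the factor $|C|$ in the colouring bound cannot be improved and that this mechanism indeed attains ratio exactly $2$.
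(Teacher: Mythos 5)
Your proof is correct, and it reaches the paper's conclusion by a genuinely different route for the key step. Both arguments reduce the theorem to the same per-coalition bound $SW(C)\le\mu(C)$ (in the paper's notation, $SW(C^*_h)\le w(m'_h)$), glue the per-coalition maximum matchings into a single matching $m'$ of $\bar{G}$, and finish with $SW(\mathcal{C}^m)=\frac{w(m)}{2}\ge\frac{w(m')}{2}\ge\frac{\mathtt{OPT}(\mathbf{d})}{2}$; strategyproofness and acceptability are handled identically. Where you differ is in how the per-coalition bound is established. The paper splits each optimal coalition $C^*_h$ into the set $A_h$ of vertices matched by $m'_h$ and the leftover set $B_h$ (which is stable by maximality of $m'_h$), and proves two separate facts: an exchange argument (its Proposition~\ref{claim:bound on the edges_matching}) giving $\sum_{b\in B_h} w(i,b)+w(j,b)\le w(i,j)(|B_h|+1)$ for each matched edge $\{i,j\}$, and a 1-factorization argument (its Proposition~\ref{claim:ratio_between_matching_and_weight_of_the_graph}) bounding the non-matching edges inside $A_h$ by $w(m'_h)(|A_h|-2)$; summing the pieces and dividing by $|C^*_h|=|A_h|+|B_h|$ yields the bound. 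You instead prove the single inequality $w(E(C))\le|C|\cdot\mu(C)$ by properly edge-colouring $K_{|C|}$ with $\chi'(K_{|C|})\le|C|$ colours ($|C|-1$ for even, $|C|$ for odd order) and noting that each colour class restricted to $C$ is a matching of weight at most $\mu(C)$. This subsumes both of the paper's propositions in one stroke: you need neither the $A_h$/$B_h$ split, nor the exchange argument, nor the stability of $B_h$ --- indeed the paper's second proposition is exactly your colouring idea confined to the even clique on $A_h$. Your version is shorter and visibly works verbatim for arbitrary non-negative symmetric weights, matching the paper's remark that this analysis improves the known $4$-approximation for welfare maximization in symmetric FHGs to $2$; and your odd-clique sanity check correctly certifies that neither the factor $|C|$ in the colouring bound nor the overall ratio $2$ can be improved for this mechanism.
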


\begin{proof}
Let $m$ be the matching computed by the mechanism and
$\mathcal{C}^m$ be the coalitions induced by $m$. Let
$\mathcal{C}^*=\{C^*_1,\ldots,C^*_p\}$ be optimal coalitions (we do not consider optimal coalitions having social welfare equal to zero, indeed we can ignore them). Let
$m'=m'_1 \cup \ldots \cup m'_p$ where $m'_h$, $1\leq h \leq p$, is a
maximum matching in the graph induced by the vertices of $C^*_h$.
Let $\mathcal{C}^{m'}$ be the coalitions induced by $m'$. Let $A_h$
be the vertices matched in $m'_h$ and $B_h = C^*_h \setminus A_h$.
Notice that $B_h$ is a stable set and that $|A_h|$ is an even
number.

\begin{proposition}\label{claim:bound on the edges_matching}
When $|B_h|>0$, then for any $h=1,\ldots,p$, and any edge $\{i,j\}
\in m'_h$, we have that $\sum_{b \in B_h} w(i,b) + w(j,b) \leq
w(i,j)(|B_h|+1)$.
\end{proposition}

\begin{proof}
First notice that, for any $b \in B_h$, it holds that $w(i,b) \leq w(i,j)$ and
$w(j,b) \leq w(i,j)$, since otherwise we can get a better matching by
removing the edge $\{i,j\}$ from $m'_h$ and adding the new edge
having weight strictly greater than $w(i,j)$. We now distinguish two
cases depending on the size of $B_h$. If $|B_h|>1$, then suppose that
$\sum_{b \in B_h} w(i,b) + w(j,b) > w(i,j)(|B_h|+1)$. It implies
there are two distinct edges $\{i,b\}$ and $\{j,b'\}$ for some $b,b'
\in B_h$ such that $w(i,b)+w(j,b') > w(i,j)$ and then contradicting
the fact that $m'_h$ is a maximum matching in $C^*_h$. If $|B_h|=1$
then the claim easily follows from the observation that $w(i,b) \leq
w(i,j)$ and $w(j,b) \leq w(i,j)$.
\end{proof}

Let $\hat{E_h}$ be the set of edges of the graph induced by the
vertices of $A_h$ minus the edges belonging to the matching $m'_h$.
Moreover, let $w(\hat{E_h}) = \sum_{\{i,j\} \in \hat{E_h}} w(i,j)$.

\begin{proposition}\label{claim:ratio_between_matching_and_weight_of_the_graph}
For any $h=1,\ldots,p$, then $w(\hat{E_h}) \leq w(m'_h)(|A_h|-2)$.
\end{proposition}

\begin{proof}
Assume for a contradiction that $w(\hat{E_h}) > w(m'_h)(|A_h|-2)$. Let us consider the graph $G_{A_h}$ induced by the vertices of $A_h$ and suppose that $G_{A_h}$ is complete (if it is not complete, we can just add edges of weights zero). It is easy to see that all the edges of such complete graph can be partitioned into $|A_h| - 1$ different perfect matchings (recall that $|A_h|$ is an even number). It implies that must exist a perfect matching in $G_{A_h}$ having weight at least equal to $\frac{w(\hat{E_h}) + w(m'_h)}{|A_h|-1} > \frac{w(m'_h)(|A_h|-2) + w(m'_h)}{|A_h|-1} = w(m'_h)$ thus contradicting the fact that $m'_h$ is a maximum matching.
\end{proof}

Then, when $|B_h|>0$, by using propositions~\ref{claim:bound on the
edges_matching}
and~\ref{claim:ratio_between_matching_and_weight_of_the_graph}, we
can bound the social welfare of $C^*_h$, for any $h=1,\ldots,p$, as
follows:


\begin{align}
& SW(C^*_h) = \nonumber \\
& = \frac{1}{|C^*_h|}[\sum_{\{i,j\} \in m'_h}
(w(i,j)+\sum_{b \in B_h} w(i,b) + w(j,b)) + w(\hat{E_h})]   \nonumber \\
& \leq \frac{1}{|C^*_h|}[w(m'_h) + w(m'_h)(|B_h| + 1)
+ w(m'_h)(|A_h|-2)]\nonumber \\
& = w(m'_h) . \nonumber
\end{align}
Similarly, when $|B_h|=0$ we can get that $SW(C^*_h) \leq w(m'_h)$.
Therefore, overall we have that $SW(C^*) \leq w(m')$. Since it is
easy to see that $w(m) \geq w(m')$, then we have that the social
welfare of $\mathcal{C}^m$ is
\begin{align}
SW(\mathcal{C}^m) = \frac{w(m)}{2} \geq \frac{w(m')}{2} \geq
\frac{SW(\mathcal{C}^*)}{2}. \nonumber
\end{align}  

\end{proof}


We point out that, when dealing with FHGs, it is natural to resort on matchings. Many papers (for instance \cite{ABH2014,AGGMT2015,BFFMM14,BFFMM15}) used them. The challenge is how to exploit their properties, and in this sense we make some steps forward. Indeed, we better exploit properties of maximum weighted matchings. This is proved by the fact that, our analysis can be used to improve from the $4$-approximation (Theorem 7 of the paper \cite{AGGMT2015}) of maximum weighted matching for symmetric valuations, i.e., undirected graph, to a $2$-approximation. Another remark is that, our results are not only working for the approximation of asymmetric FHGs, i.e., directed graphs, but also include the strategyproofness, which was not considered before for FHGs.

We finally notice that, $2$ is the best approximation achievable by using matchings, when dealing with the problem of computing the maximum social welfare in symmetric fractional hedonic games. In fact, consider a complete graph of $n$ nodes. In the grand coalition, each node has utility $\frac{n-1}{n}$ (consider big $n$), while in a matching, each node has utility at most $\frac{1}{2}$.

\section{Conclusion and future work}\label{Sec:Conclusion_and_future_work}
In this paper, we studied strategyproof mechanisms for ASHGs and FHGs, under general and specific additive
valuation functions. Despite the theoretical
interest for specific valuations, for which we were able to show
better bounds with respect to generic valuations, specific
valuations also model realistic scenarios. 

%

Our paper leaves some appealing open problems. First of all, it would be nice to close the gaps of Table \ref{table}, and in particular the gap of deterministic strategyproof mechanisms for duplex valuations. Moreover, it is worth to understand whether randomized strategyproof mechanisms can achieve significantly better performance than deterministic ones. It would be also important to understand what happens when valuations are drawn at random from some distribution (in order to avoid the bad instances), or when there are size constraints to the coalitions. Finally, another research direction, is that of considering more general valuation functions than additive ones.

%
%
%


\end{document}